\documentclass{article}

\usepackage{microtype}
\usepackage{graphicx}
\usepackage{subcaption}
\usepackage{booktabs} %

\usepackage{hyperref}

\newcommand{\argmin}{\mathop{\rm argmin}}
\newcommand{\argmax}{\mathop{\rm argmax}}

\newcommand{\expert}{\operatorname{E}}
\newcommand{\bc}{\operatorname{BC}}
\newcommand{\piE}{\pi^{\expert}}

\newcommand{\gDE}{\gD^{\expert}}

\usepackage{url}            %
\usepackage{booktabs}       %
\usepackage{multirow}    
\usepackage{amsfonts}       %
\usepackage{nicefrac}       %
\usepackage{microtype}      %
\usepackage{natbib}
\usepackage{enumerate}
\usepackage{hhline}
\usepackage{makecell}
\usepackage{pifont}

\usepackage{graphicx} %
\usepackage{caption}
\usepackage{subcaption}
\usepackage{amsmath}
\usepackage{amsthm}
\usepackage{amssymb}
\usepackage{tikz}
\usepackage{xcolor}
\usepackage{float}
\usetikzlibrary{arrows}

\allowdisplaybreaks

\usepackage{mathrsfs}

\usepackage{algorithm}
\usepackage{algorithmic}
\usepackage{hyperref}
\usepackage{bm}

\allowdisplaybreaks

\newcolumntype{C}[1]{>{\centering\arraybackslash}m{#1}}

\newcommand{\labs}{\left\vert}
\newcommand{\rabs}{\right\vert}
\newcommand{\lnorm}{\left\Vert}
\newcommand{\rnorm}{\right\Vert}

\newcommand{\expect}{\mathbb{E}}

\newcommand{\indict}{\mathbb{I}}

\newtheorem{thm}{Theorem} 
\newtheorem{lem}{Lemma} 
 
\newtheorem{prop}{Proposition} 
 
\newtheorem{defn}{Definition}

\usepackage[capitalize]{cleveref}
\crefname{thm}{Theorem}{Theorems}
\crefname{lem}{Lemma}{Lemmas}
\crefname{cor}{Corollary}{Corollaries}
\crefname{prop}{Proposition}{Propositions}
\crefname{asmp}{Assumption}{Assumptions}
\crefname{defn}{Definition}{Definitions}
\crefname{oracle}{Oracle}{Oracles}
\crefname{fact}{Fact}{Facts}
\crefname{conj}{Conjecture}{Conjectures}
\crefname{rem}{Remark}{Remarks}
\crefname{example}{Example}{Examples}
\crefname{condition}{Condition}{Conditions}
\crefname{exercise}{Exercise}{Exercises}
\crefname{algorithm}{Algorithm}{Algorithms}
\crefname{table}{Table}{Tables}
\crefname{figure}{Figure}{Figures}
\crefname{section}{Section}{Sections}
\crefname{subsection}{Section}{Sections}
\crefname{appendix}{Appendix}{Appendices}
\crefname{mess}{Message}{Messages}
\crefname{claim}{Claim}{Claims}
\crefname{question}{Question}{Questions}
\crefname{case}{Case}{Cases}

\renewcommand{\algorithmicrequire}{\textbf{Input:}}
\renewcommand{\algorithmicensure}{\textbf{Output:}}

\definecolor{red}{rgb}{1, 0, 0}
\newcommand{\RED}[1]{{\color{red} #1}}

\definecolor{green}{rgb}{0, 1, 0}

\definecolor{blue}{rgb}{0, 0, 1}

\definecolor{orange}{rgb}{1, 0.4, 0.0}

\input{packages/math_commands}
\newcommand{\truereward}{r^{\star}}
\newcommand{\soft}{\text{soft}}
\newcommand{\algname}{\text{CoPT-AIL}}

\usepackage[accepted]{icml2026}

\usepackage{amsmath}
\usepackage{amssymb}
\usepackage{mathtools}
\usepackage{amsthm}
\usepackage{xcolor}
\newcommand{\AlgHL}[1]{%
  \begingroup
  \setlength{\fboxsep}{1pt}%
  \colorbox{green!20}{\strut #1}%
  \endgroup
}

\theoremstyle{plain}

\theoremstyle{definition}

\theoremstyle{remark}

\usepackage[textsize=tiny]{todonotes}

\icmltitlerunning{Provably Efficient Policy-Reward Co-Pretraining for Adversarial Imitation Learning}

\begin{document}

\twocolumn[
  \icmltitle{Provably Efficient Policy-Reward Co-Pretraining for Adversarial Imitation Learning}

  \icmlsetsymbol{equal}{*}

  \begin{icmlauthorlist}
    \icmlauthor{Tian Xu}{nkl-ai}
    \icmlauthor{Zexuan Chen}{nkl-ai}
    \icmlauthor{Zhilong Zhang}{nkl-ai}
    \icmlauthor{Yi-Chen Li}{nkl-ai}
    \icmlauthor{Chenyang Wang}{nkl-ai}
    \icmlauthor{Lei Yuan}{nkl-ai}
    \icmlauthor{Yang Yu}{nkl-ai}
  \end{icmlauthorlist}

  \icmlaffiliation{nkl-ai}{National Key Laboratory for Novel Software Technology and School of Artificial Intelligence, Nanjing University, China}

  \icmlcorrespondingauthor{Yang Yu}{yuy@nju.edu.cn}

  \icmlkeywords{Machine Learning, ICML}

  \vskip 0.3in
]

\printAffiliationsAndNotice{}  %

\begin{abstract}

Adversarial imitation learning (AIL) achieves high-quality imitation compared to behavioral cloning (BC), but demands substantial online environment interaction. Recent empirical work has explored initializing AIL algorithms with BC-pretrained policies to address this limitation, yet a rigorous theoretical understanding of pretraining's role in AIL remains elusive. This paper provides a systematic theoretical analysis and introduces principled pretraining algorithms for accelerating AIL. We begin by analyzing AIL with policy pretraining alone, identifying reward error as the dominant source of suboptimality. This reveals a critical and previously overlooked gap: the absence of reward pretraining. Motivated by this finding, we develop a principled policy–reward co-pretraining approach grounded in a reward-shaping analysis. Our analysis uncovers a fundamental connection between expert policies and shaping rewards, which naturally gives rise to CoPT-AIL, an approach that jointly pretrains both policy and reward through a single BC procedure. We prove that CoPT-AIL achieves an improved imitation gap bound over standard AIL, establishing the first theoretical guarantee for the benefits of pretraining in AIL. Experimental results confirm CoPT-AIL's superior performance over existing AIL methods.

\end{abstract}

\section{Introduction}
Imitation learning (IL) \citep{argall2009survey, osa2018survey} is a foundational technique in artificial intelligence that enables agents to acquire complex behaviors by mimicking expert demonstrations. It has achieved remarkable success across diverse domains, including autonomous driving \citep{pan2017agile}, generalist robot learning \citep{brohan2023rt, mees2024octo}, and language modeling \citep{brown2020language}.

IL comprises two primary methodological families: behavioral cloning (BC) and adversarial imitation learning (AIL). BC is an offline approach that directly applies supervised learning to expert demonstrations \citep{Pomerleau91bc, ross11dagger, brantley2020disagreement}. While conceptually simple, BC is vulnerable to compounding errors \citep{syed10reduction}, which degrade imitation quality. AIL \citep{pieter04apprentice, syed07game, ho2016gail, Kostrikov19dac}, by contrast, seeks to match the expert's state-action distribution via a minimax optimization framework. It alternates between recovering an adversarial reward function that maximizes the policy value gap between expert and learner, and updating the policy to minimize this gap. Because this optimization requires online environment interaction, AIL is classified as an online method. Both theoretical analysis \citep{rajaraman2020fundamental, xu2020error, xu2026understanding} and empirical evidence \citep{ho2016gail, Kostrikov19dac, ghasemipour2019divergence} confirm that AIL effectively overcomes BC's compounding error problem and achieves high-quality imitation.

\begin{figure*}[t]
    \centering
    \includegraphics[width=0.7\linewidth]{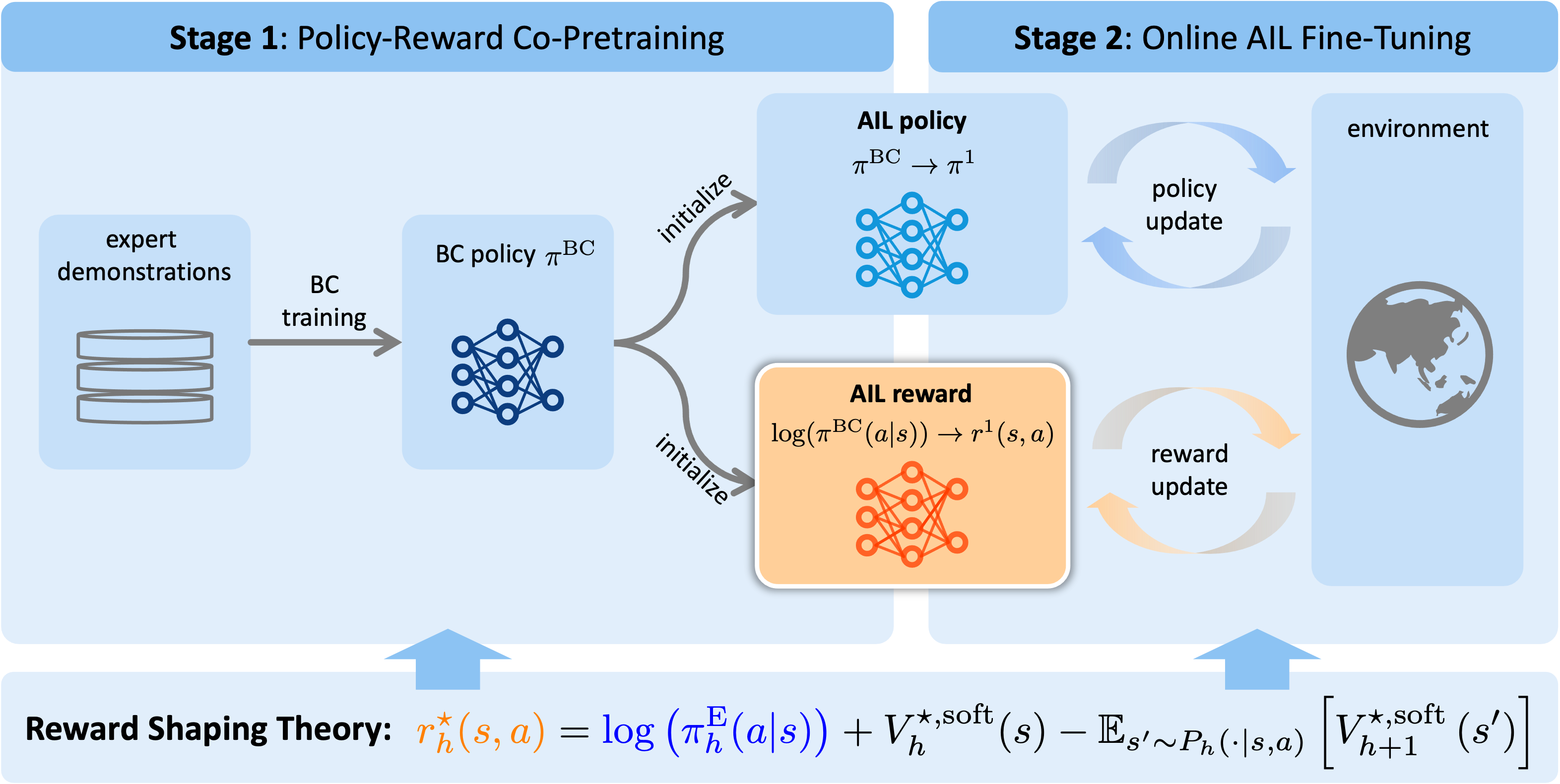}
    \caption{Illustration of \algname{}. \textbf{Stage 1 (Policy-Reward Co-Pretraining):} A BC policy $\pi^{\mathrm{BC}}$ is trained on expert demonstrations and used to jointly initialize both the AIL policy and reward function. \textbf{Stage 2 (Online AIL Fine-Tuning):} The co-pretrained policy and reward are fine-tuned through standard online AIL via environment interaction.}
    \label{fig:copt_ail_illustration}
\end{figure*}

While AIL demonstrates superior performance, its reliance on extensive online environment interactions presents a significant limitation \citep{ho2016gail}. To mitigate this limitation, researchers have explored various approaches to combine AIL with BC \citep{jena2021augmenting, orsini2021matters, haldar2023watch, watson2023coherent, yue2024ollie}.
The most intuitive approach involves pretraining policies using BC, then fine-tuning them with AIL through online interactions \citep{ho2016gail}. However, empirical studies consistently show that this strategy provides minimal benefits \citep{sasaki2018sample, jena2021augmenting, orsini2021matters, yue2024ollie}. The pretrained policy's performance typically degrades during early AIL training, negating most advantages from the initial BC phase.

Recent work has proposed incorporating reward pretraining to overcome this limitation \citep{watson2023coherent, yue2024ollie}. Specifically, \citet{watson2023coherent} pretrains a reward under which the BC policy becomes optimal, while \citet{yue2024ollie} builds on the closed-form solution of the GAIL reward function \citep{ho2016gail} and leverages supplementary datasets to approximate it. Despite empirical gains in certain settings, the theoretical role of reward pretraining in AIL remains poorly understood, which risks limiting future algorithmic progress.

This paper aims to bridge the gap between theory and practice by providing rigorous theoretical guarantees on the \emph{imitation gap} (i.e., performance difference between the expert and learner) and developing effective pre-training algorithms to accelerate AIL. Our key contributions are threefold.

\begin{itemize}
    \item First, we develop a theoretical analysis for AIL with policy pretraining alone, identifying reward error as the dominant error source and establishing a sharp theoretical baseline. Specifically, the overall imitation gap can be decomposed into policy error and reward error. We prove that while policy pre-training effectively reduces the cumulative policy error, the specific reward error induced by random reward initialization remains a persistent bottleneck (\cref{prop:ail_with_pretrained_pi}). This rigorous diagnosis serves a dual purpose: it mathematically isolates the reward error as the precise target for our algorithmic design, and establishes a tight theoretical baseline for the subsequent theoretical comparison.

    \item Guided by this theoretical diagnosis, we derive a principled policy–reward co-pretraining method, CoPT-AIL, grounded in a reward-shaping analysis. We prove that inferring a shaping reward, rather than the original true reward, is already sufficient to reduce reward error, thereby circumventing the reward ambiguity issue (\cref{prop:reward_shaping}). Crucially, our analysis reveals a fundamental connection between the expert policy and shaping reward, naturally giving rise to the approach of jointly pretraining policies and rewards through a single BC procedure. This yields our complete algorithm CoPT-AIL, \textbf{AIL} with Policy-Reward \textbf{Co}-\textbf{P}re\textbf{t}raining, which is illustrated in \cref{fig:copt_ail_illustration}. 

    \item Finally, we provide a rigorous theoretical analysis demonstrating CoPT-AIL's superiority over prior AIL approaches. Our theoretical results show that CoPT-AIL can provably reduce reward error through reward pretraining, achieving an improved imitation gap bound compared to standard AIL without pretraining under mild assumptions (\cref{thm:ail_pretrain_reward_policy}). To the best of our knowledge, this represents the first theoretical guarantee for the efficiency gains of pretraining in AIL. Experimental evaluation confirms CoPT-AIL's superior performance over existing methods.   
\end{itemize}

\section{Preliminaries}
\paragraph{Markov Decision Process.}We consider episodic Markov Decision Processes (MDPs) represented by the tuple $\mathcal{M} = (\mathcal{S}, \mathcal{A}, P, r^{\star}, H, \rho)$, where $\mathcal{S}$ and $\mathcal{A}$ denote the state and action spaces, respectively, $H$ is the planning horizon, and $\rho$ is the initial state distribution. The transition dynamics is characterized by $P = \{ P_1, \ldots, P_{H} \}$, where $P_h(s_{h+1}|s_h, a_h)$ gives the probability of transitioning to state $s_{h+1}$ from state $s_h$ upon taking action $a_h$ at step $h \in [H]$. The reward is characterized by $r^{\star} = \{ r^{\star}_1, \ldots, r^{\star}_{H} \}$, where without loss of generality, $r^{\star}_h: \mathcal{S} \times \mathcal{A} \rightarrow [0, 1]$ for all $h \in [H]$.

A policy $\pi = \{ \pi_1, \ldots, \pi_H \}$ maps states to action distributions, with $\pi_h: \mathcal{S} \rightarrow \Delta(\mathcal{A})$, where $\Delta(\mathcal{A})$ denotes the probability simplex over actions. Here, $\pi_h(a|s)$ represents the probability of selecting action $a$ in state $s$ at step $h$. The interaction protocol proceeds as follows: each episode begins with the environment sampling an initial state $s_1 \sim \rho$. At each step $h$, the agent observes state $s_h$, selects action $a_h \sim \pi_h(\cdot|s_h)$, receives reward $r^{\star}_h(s_h, a_h)$, and transitions to the next state $s_{h+1} \sim P_h(\cdot|s_h, a_h)$. The episode terminates after $H$ steps. We evaluate policy performance using the expected cumulative reward:
\begin{equation*}
\begin{split}
    V^{\pi} := & \mathbb{E} \Bigg[ \sum_{h=1}^{H} r^{\star}_h (s_h, a_h) \Bigg| a_h \sim \pi_h (\cdot|s_h), \\
    & s_{h+1} \sim P_h(\cdot|s_h, a_h), \forall h \in [H] \Bigg].
\end{split}
\end{equation*}

The Q-function is defined as $Q^{\pi}_h (s, a) := \mathbb{E} \left[ \sum_{h'=h}^H r^{\star}_{h'} (s_{h'}, a_{h'}) \Big| (s_h, a_h)= (s, a), \pi \right]$. We also define the state visitation distribution $d^{\pi}_h (s) := \mathbb{P}^{\pi} (s_h=s)$ and state-action visitation distribution $d^{\pi}_h (s, a) := \mathbb{P}^{\pi} (s_h=s, a_h=a)$.

\paragraph{Imitation Learning.} The goal of imitation learning (IL) is to acquire a high-quality policy \emph{without} access to the reward function $r^{\star}$. To achieve this, we assume the learner has access to an expert dataset consisting of $N$ trajectories collected by the expert policy $\piE$:
\begin{equation*}
    \begin{split}
        \gDE = & \{ \tau^{i} = \left( s^i_1, a^i_1, s^i_2, a^i_2, \ldots, s^i_H, a^i_H \right); \\
        & a^i_h \sim \piE_h(\cdot|s^i_h), s^i_{h+1} \sim P_h(\cdot|s^i_h, a^i_h), \forall h \in [H] \}_{i=1}^N,
    \end{split}
\end{equation*}
The learner uses this dataset $\gDE$ to learn a policy that mimics the expert's behavior. We measure imitation quality using the \emph{imitation gap} \citep{pieter04apprentice, ross2010efficient, rajaraman2020fundamental}, defined as $V^{\piE} - V^{\pi}$, where $\pi$ is the learned policy. Essentially, we hope that the learned policy can perfectly mimic the expert such that the imitation gap is small.

Typical IL works \citep{ng00irl, pieter04apprentice} often assume that the expert policy is optimal with respect to the true reward $r^\star$, which suffers from the issue that degenerate constant rewards can induce the same expert policy \citep{ziebart2008maximum}. Following maximum entropy inverse reinforcement learning \citep{ziebart2008maximum, bloem2014infinite}, we avoid this issue by assuming that the expert is a soft-optimal policy \citep{haarnoja2018sac, geist2019theory} with respect to $r^\star$. Formally, the expert policy is given by
\begin{equation}
    \label{eq:maxent_optimal_policy}
    \piE_h (a|s) = \exp \lp Q^{\star, \soft}_h (s, a) - V^{\star, \soft}_h (s) \rp.
\end{equation}

Here $Q^{\star, \soft}_h (s, a)$ and $V^{\star, \soft}_h (s)$ denote the soft-optimal Q-function and value function, respectively.

\paragraph{Behavioral Cloning.} As a classical IL method, behavioral cloning (BC) \citep{Pomerleau91bc} performs maximum likelihood estimation (MLE) to mimic the expert.
\begin{equation}
\label{eq:bc_objective}
    \pi^{\bc} =  \argmax_{\pi \in \Pi} \sum_{i=1}^N \sum_{h=1}^{H} \log \lp \pi_h(a^i_h |s^i_h) \rp.
\end{equation}
Here $\Pi$ is the set of all policies. This optimization problem can be solved entirely using pre-collected expert data without any environment interaction, making BC a purely offline method. However, this offline nature introduces a fundamental limitation: BC is susceptible to compounding errors \citep{ross2010efficient}, resulting in poor imitation performance given limited demonstrations.   

\paragraph{Adversarial Imitation Learning.} As another prominent class of IL methods, adversarial imitation learning (AIL) imitates expert behavior through a game-theoretic approach.
\begin{equation}
\label{eq:ail_objective}
    \max_{\pi \in \Pi} \min_{r \in \gR} V^{\pi}_{r} -  V^{\piE}_{r}.
\end{equation}
Here $V^{\pi}_{r}$ denotes the value of policy $\pi$ under reward $r$ and $\gR := \{r: \forall (s, a, h) \in \gS \times \gA \times [H], r_h (s, a) \in [0, 1] \}$ denotes the reward class. In this minimax objective, AIL infers a reward function that maximizes the value gap between the expert policy and the learner's policy. Subsequently, it learns a policy that minimizes this value gap using the inferred reward. Note that the outer optimization problem concerning the policy is equivalent to a reinforcement learning (RL) problem under the inferred reward $r$. Solving RL problems typically requires online environment interactions, marking AIL as an online approach. AIL has proven to mitigate the compounding-error problem in BC both theoretically \citep{xu2020error, rajaraman2020fundamental,xu2026understanding} and empirically \citep{ho2016gail, Kostrikov19dac, ghasemipour2019divergence}, achieving good imitation performance. However, AIL relies on extensive online environment interactions, presenting a significant limitation in scenarios where such interactions are expensive.

\section{The Critical Role of Reward Pretraining in Adversarial Imitation Learning}

A natural approach to improve the interaction efficiency of AIL involves first pretraining policies via BC, then fine-tuning them through AIL with online interactions \citep{ho2016gail}. This intuitive strategy leverages BC to establish an acceptable initial policy before engaging in interaction-expensive adversarial learning. However, numerous empirical works \citep{sasaki2018sample, jena2021augmenting, orsini2021matters, yue2024ollie} have consistently found that policy pretraining alone provides minimal benefits. In particular, these works observed that the policy quality deteriorates rapidly at the beginning of AIL training, negating most advantages gained from the initial BC phase. This phenomenon suggests fundamental limitations of AIL with policy pretraining alone, yet a rigorous theoretical explanation of why these limitations arise remains elusive.

To address this theoretical gap, we develop a rigorous analysis for AIL with policy pretraining. Concretely, we formally examine a standard AIL procedure with BC-pretrained policies, outlined in \cref{alg:ail_with_pretrained_pi}.  

\begin{algorithm}[htbp]
\caption{Adversarial Imitation Learning with Policy Pretraining Alone}
\label{alg:ail_with_pretrained_pi}
\begin{algorithmic}[1]
\REQUIRE{Randomly initialized reward $r^1$ and demonstrations $\gDE$.}
\STATE {Pretrain a policy via BC based on Eq.(\ref{eq:bc_objective}): $\pi^1 \leftarrow \pi^{\bc}$.}
\FOR{$k = 1, 2, \ldots, K-1$}
\STATE{Calculate the Q-value function $\{ Q^{\pi^k, r^k}_h \}_{h=1}^H$ for policy $\pi^k$.}
\STATE{Update the policy by KL-regularized policy optimization: \\
  $\begin{aligned}
    \pi^{k+1}_h (\cdot|s) = \argmax_{p \in \Delta (\mathcal{A})} \Big\{ &\mathbb{E}_{a \sim p (\cdot)} [ Q^{\pi^k, r^k}_h (s, a) ] \\
    &- \frac{1}{\eta} \KL ( p (\cdot), \pi^k_h (\cdot|s) ) \Big\}.
  \end{aligned}$
}
\STATE{Update the reward by solving the optimization problem: \\
  $\begin{aligned}
    r^{k+1} = \argmin_{r \in \mathcal{R}} \Big\{ &\mathbb{E}_{\tau \sim \pi^{k+1}} \Big[ \sum_{h=1}^H r_h (s_h, a_h) \Big] \\
    &- \mathbb{E}_{\tau \sim \gDE} \Big[ \sum_{h=1}^H r_h (s_h, a_h) \Big] \Big\}.
  \end{aligned}$
}
\ENDFOR
\ENSURE{$\bar{\pi}$ sampled uniformly from $\{\pi^1, \ldots, \pi^{K} \}$.}
\end{algorithmic}
\end{algorithm}

\cref{alg:ail_with_pretrained_pi} operates in two stages. First, we pretrain policies through BC on expert demonstrations. Second, we conduct the online AIL process, which alternates between policy and reward updates. During policy updates, we employ KL-regularized policy optimization \citep{shani2020optimistic, cai2020oppo} to solve the outer RL problem in Eq.(\ref{eq:ail_objective}). During reward updates, with the newly recovered policy $\pi^{k+1}$, we update the reward by minimizing the policy value difference between $\pi^{k+1}$ and $\piE$, i.e., $\min_{r \in \gR} V^{\pi^{k+1}}_{r} - \widehat{V}^{\piE}_{r}$, where $\widehat{V}^{\piE}_{r} := \expect_{\tau \sim \gDE} [\sum_{h=1}^H r_h (s_h, a_h)]$ represents an empirical estimation of $V^{\piE}_{r}$ based on demonstrations. Finally, following the standard online-to-batch conversion technique \citep{Orabona19a_modern_introduction_to_ol}, \cref{alg:ail_with_pretrained_pi} outputs a policy uniformly sampled from the recovered policies throughout training.

The following proposition provides the imitation gap bound of AIL with policy pretraining alone.

\begin{prop}
\label{prop:ail_with_pretrained_pi}
 Consider adversarial imitation learning with policy pretraining shown in Algorithm \ref{alg:ail_with_pretrained_pi}. For any $\delta \in (0, 1)$, with probability at least $1-\delta$, it holds that
 \begin{equation}
 \label{eq:ail_initial_bound} 
 \begin{split}
    V^{\piE} - V^{\widebar{\pi}} 
    &\leq \underbrace{\frac{1}{K} \bigg( V^{\piE}_{\truereward} - V^{\pi^1}_{\truereward} - \big( V^{\piE}_{r^1} - V^{\pi^1}_{r^1} \big) \bigg) }_{\text{reward error}} 
    \\
    &\quad + 2 \sqrt{\frac{2 |\gS| |\gA| H^2 \log (H/\delta) }{N}}      
    \\
    &\quad + \underbrace{ \frac{1}{\eta K}  \mathbb{E} \bigg[ \sum_{h=1}^H \KL (\piE_h (\cdot|s_h), \pi^1_h (\cdot|s_h))   \bigg| \piE  \bigg] }_{\text{policy error}} 
    \\
    &\quad + \frac{\eta}{2}  H^3. 
\end{split}
 \end{equation}

Furthermore, consider pretraining policies via BC (i.e., $\pi^1 := \pi^{\bc}$) and choose stepsize $\eta= \widetilde{\Theta} \big( \sqrt{(|\gS| |\gA|) / (H^2 K N)} \big)$, we have that
\begin{equation}
\label{eq:ail_final_bound}
\begin{split}
    V^{\piE} - V^{\widebar{\pi}} 
    &\precsim \frac{1}{K} \big(  V^{\piE}_{\truereward} - V^{\pi^1}_{\truereward} - ( V^{\piE}_{r^1} - V^{\pi^1}_{r^1} ) \big) 
    \\
    &\quad + \sqrt{\frac{ |\gS| |\gA| H^2 \log ( H/\delta) }{N}} 
    \\
    &\quad + \sqrt{\frac{ |\gS| |\gA| H^4 \log^2 ( H N^2/\delta)}{K N}} . 
\end{split}
\end{equation}
\end{prop}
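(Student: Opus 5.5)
The plan is to treat \cref{alg:ail_with_pretrained_pi} as an online game between a policy player running mirror descent and a reward player running best response (follow-the-leader). The imitation gap is reduced to a sum of regret terms plus a statistical term. Since $\truereward \in \gR$, we have $V^{\piE} - V^{\bar\pi} = \frac{1}{K}\sum_{k=1}^K (V^{\piE}_{\truereward} - V^{\pi^k}_{\truereward})$. For each $k$ we add and subtract $V^{\piE}_{r^k} - V^{\pi^k}_{r^k}$. This splits the gap into a \emph{policy-player regret} $\frac1K\sum_k (V^{\piE}_{r^k} - V^{\pi^k}_{r^k})$ and a \emph{reward-player term} $\frac1K\sum_k \big[(V^{\piE}_{\truereward}-V^{\pi^k}_{\truereward}) - (V^{\piE}_{r^k}-V^{\pi^k}_{r^k})\big]$.

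For the reward-player term, index the rewards so that $r^{k}$ for $k\ge2$ is the best response to $\pi^{k}$ against the empirical expert value. Then, for $k\ge 2$, $V^{\pi^k}_{r^k}-\widehat V^{\piE}_{r^k} \le V^{\pi^k}_{\truereward}-\widehat V^{\piE}_{\truereward}$, i.e. $(V^{\piE}_{\truereward}-V^{\pi^k}_{\truereward})-(\widehat V^{\piE}_{r^k}-V^{\pi^k}_{r^k})\le \widehat V^{\piE}_{\truereward} - V^{\piE}_{\truereward}$. Replacing the empirical $\widehat V^{\piE}_{r}$ by $V^{\piE}_r$ uniformly over $r\in\gR$ costs $2\sup_{r\in\gR}|V^{\piE}_r-\widehat V^{\piE}_r| \le 2\sum_h \|d^{\piE}_h - \widehat d^{\piE}_h\|_1$. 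For each $h$, the $\ell_1$ deviation of an empirical distribution over $|\gS||\gA|$ atoms from $N$ samples is at most $\sqrt{2|\gS||\gA|\log(H/\delta)/N}$ by a standard bound (Weissman-type, or McDiarmid on the $\ell_1$ norm), with a union bound over $h$. Summing over $h$ gives the $2\sqrt{2|\gS||\gA|H^2\log(H/\delta)/N}$ term. The $k=1$ summand uses the random $r^1$, which is not a best response; it is left as is, contributing exactly the displayed reward error $\frac1K(\cdot)$.

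For the policy regret, apply the performance difference lemma: $V^{\piE}_{r^k}-V^{\pi^k}_{r^k} = \mathbb{E}_{\piE}\big[\sum_h \langle \piE_h(\cdot|s_h)-\pi^k_h(\cdot|s_h), Q^{\pi^k,r^k}_h(s_h,\cdot)\rangle\big]$. The KL-regularized update is exactly mirror descent (exponential weights) at each state with gain $Q^{\pi^k,r^k}_h\in[0,H]$. The standard one-step mirror descent inequality gives, per state, $\eta\langle \piE-\pi^k, Q\rangle \le \KL(\piE,\pi^k)-\KL(\piE,\pi^{k+1}) + \frac{\eta^2}{2}\|Q\|_\infty^2$. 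Since the weighting distribution $d^{\piE}_h$ does not depend on $k$, the sum telescopes. This yields $\frac{1}{\eta K}\mathbb{E}_{\piE}[\sum_h \KL(\piE_h,\pi^1_h)] + \frac{\eta}{2}H\cdot H^2$, which establishes \eqref{eq:ail_initial_bound}. The terminal $k=K$ policy needs a reward $r^K$; this off-by-one is handled by defining $r^K$ analogously.

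For \eqref{eq:ail_final_bound}, the main obstacle is bounding $\mathbb{E}_{\piE}[\sum_h \KL(\piE_h,\pi^{\bc}_h)]$. The unsmoothed MLE can assign zero mass to expert actions, making the KL infinite. I would therefore use a mildly smoothed BC estimator, mixing the count-based MLE with the uniform distribution at weight $\sim 1/N$. Then apply a KL (or Hellinger-then-KL, with $\log N$ from the bounded likelihood ratio) convergence bound for smoothed MLE: for each $h$, $\mathbb{E}_{s\sim d^{\piE}_h}\KL(\piE_h(\cdot|s),\pi^{\bc}_h(\cdot|s)) \precsim |\gS||\gA|\log^2(HN^2/\delta)/N$ with probability $1-\delta$. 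This holds by an add-constant estimator analysis combined with a union bound over $h$. The total KL is therefore $\precsim H|\gS||\gA|\log^2/N$. Balancing $\frac{H|\gS||\gA|\log^2}{\eta KN}$ against $\eta H^3$ gives $\eta\asymp\sqrt{|\gS||\gA|/(H^2KN)}$ up to logs, and the sum becomes $\sqrt{|\gS||\gA|H^4\log^2/(KN)}$. Adjusting $\delta$ across the two events completes the bound.
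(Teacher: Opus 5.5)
Your proposal is correct and follows essentially the same route as the paper: you split the gap into the same reward-player and policy-regret terms. On the reward side you keep the $k=1$ summand as the reward error, use the best response for $k\ge 2$, and apply a Weissman bound with a union bound over $h$; on the policy side you use the performance difference lemma and per-state mirror descent (the paper cites the OMD regret bound directly), and your move to a smoothed BC estimator matches the paper, which tacitly relies on the add-one regularized BC analyzed in \citet{demonstration24tiapkin} rather than the raw MLE. Two cosmetic points: the right-hand side of your best-response inequality should read $V^{\piE}_{\truereward} - \widehat V^{\piE}_{\truereward}$ (harmless once you pass to the supremum), and no off-by-one fix is needed because the algorithm already produces $r^K$ at iteration $K-1$.
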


The complete proof is provided in Appendix \ref{subsec:proof_of_prop_ail_with_pretrained_pi}. Eq.(\ref{eq:ail_initial_bound}) in Proposition \ref{prop:ail_with_pretrained_pi} reveals that the imitation gap of AIL with policy pretraining contains two fundamental error components: reward error and policy error. The reward error quantifies the discrepancy between the true reward $r^\star$ and the initial reward $r^1$ through value difference. The policy error specifically measures the KL divergence between the expert policy $\pi^{\expert}$ and the initial policy $\pi^1$. Additionally, the second term in the RHS of Eq.(\ref{eq:ail_initial_bound}) captures the statistical error arising from the finite number of expert demonstrations, while the last term represents the optimization error that occurs in performing KL-regularized policy updates.

Policy pretraining via BC (i.e., $\pi^{1} \leftarrow \pi^{\bc}$) effectively reduces the policy error, as the BC policy approximates the expert policy far better than a random initialization. Formally, applying the theoretical guarantee of BC \citep{demonstration24tiapkin} to bound the policy error yields the sharper result in Eq.(\ref{eq:ail_final_bound}). However, a critical limitation persists: the reward error remains large because $r^1$ is randomly initialized and may therefore be arbitrarily far from $r^\star$. This reward error can substantially inflate the overall imitation gap, particularly in the early stages of training when $K$ is small. This result thus offers a formal theoretical explanation for why AIL with policy pretraining alone yields minimal performance gains: while BC pretraining effectively reduces the policy error, the reward error, left entirely unaddressed, persists as the dominant bottleneck in the imitation gap.

Beyond the explanatory value, \cref{prop:ail_with_pretrained_pi} prescribes a clear target: to achieve meaningful gains over standard AIL, one must reduce the reward error. This insight directly motivates our algorithm introduced in the next section.

\section{Policy-Reward Co-Pretraining for Adversarial Imitation Learning}

Building on the theoretical insights from the previous section, we propose a joint pretraining approach for both policies and rewards to accelerate AIL. We first introduce a principled method for reward pretraining, then provide a rigorous theoretical analysis demonstrating its effectiveness in reducing the imitation gap.

\subsection{Method}
\label{subsec:method}

Building on Proposition \ref{prop:ail_with_pretrained_pi}, we develop a reward pretraining method to reduce the reward error represented by the term $(V^{\piE}_{\truereward} - V^{\pi}_{\truereward}) - (V^{\piE}_{r} - V^{\pi}_{r})$. We refer to this term as the \emph{relative policy evaluation error}, as it quantifies the discrepancy in evaluating the relative value difference between policies $\piE$ and $\pi$. Based on the well-known simulation lemma \citep{kearns02near}, a natural approach to reducing this error would be to pretrain a reward $r$ that closely approximates the original true reward $r^\star$, ensuring $|r^\star_h (s, a) -  r_h (s, a)|$ is small. However, the reward ambiguity issue fundamentally prevents recovering a reward function close to $r^\star$, even with complete knowledge of the expert policy and MDP \citep{cao2021identifiability, metelli2021provably, rolland2022identifiability}.

To circumvent this limitation, we argue that learning a reward close to the original $r^\star$ is not necessary for reducing the relative policy evaluation error. Instead, we demonstrate that learning an accurate \emph{shaping reward} \citep{ng1999policy} is already sufficient. We begin by introducing the formal definition of the shaping reward \footnote{\citet{ng1999policy} originally proposed shaping rewards for infinite-horizon discounted MDPs; we present the episodic adaptation here.}. 

\begin{defn}[Shaping Rewards \citep{ng1999policy}]
In an episodic MDP, for a reward function $r$ and potential shaping functions $\{ \Phi_h: \mathcal{S} \rightarrow \mathbb{R} \}_{h=1}^{H+1}$ with $\Phi_{H+1} \equiv 0$, the shaping reward is defined as:
\begin{equation*}
\begin{aligned}
    &\widetilde{r}_h (s, a) := r_h (s, a) - \Phi_h (s) \\
    &\quad + \mathbb{E}_{s^\prime \sim P_h (\cdot|s, a)} [\Phi_{h+1} (s^\prime)],
\end{aligned}
\end{equation*}
for all $(s, a, h) \in \mathcal{S} \times \mathcal{A} \times [H]$.
\end{defn}

The core design principle in shaping rewards involves a telescoping structure within the shaping functions, which theoretically guarantees that reward shaping preserves optimal policies \citep{ng1999policy}.

Crucially, the following proposition shows that value differences $V^{\pi^\prime} - V^{\pi}$ remain identical under both the original reward $r$ and its corresponding shaping reward $\widetilde{r}$.

\begin{prop}
\label{prop:reward_shaping}
    For any pair of policies $\pi$ and $\pi^\prime$, consider an arbitrary reward $r$ and its shaping reward $\widetilde{r}$ defined by $\widetilde{r}_h (s, a) := r_h (s, a) - \Phi_h (s) + \expect_{s^\prime \sim P_h (\cdot|s, a)} [\Phi_{h+1} (s^\prime)]$ with potential-based shaping functions $\{ \Phi_h \}_{h=1}^{H+1}$, it holds that
    \begin{align*}
        V^{\pi^\prime}_{r} -  V^{\pi}_{r} = V^{\pi^\prime}_{\widetilde{r}} -  V^{\pi}_{\widetilde{r}}. 
    \end{align*}
\end{prop}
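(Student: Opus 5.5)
The plan is to show that, for any fixed policy $\pi$, the value under the shaping reward differs from the value under the original reward by a quantity that depends only on $\Phi_1$ and the initial distribution $\rho$, not on $\pi$. The claimed identity for the difference $V^{\pi^\prime} - V^{\pi}$ then follows by subtracting. Concretely, I would aim to establish
\begin{equation*}
    V^{\pi}_{\widetilde{r}} = V^{\pi}_{r} - \mathbb{E}_{s_1 \sim \rho} [\Phi_1 (s_1)]
\end{equation*}
for every policy $\pi$.

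First, write the value under $\widetilde{r}$ using state-action visitation distributions:
\begin{equation*}
    V^{\pi}_{\widetilde{r}} = \sum_{h=1}^H \mathbb{E}_{(s,a) \sim d^{\pi}_h} \big[ \widetilde{r}_h(s,a) \big].
\end{equation*}
Substituting the definition of $\widetilde{r}$ splits this into three pieces:
\begin{itemize}
    \item the original value $V^{\pi}_{r}$;
    \item the negative potential terms $-\sum_{h=1}^H \mathbb{E}_{s \sim d^\pi_h} [\Phi_h(s)]$;
    \item the forward potential terms $\sum_{h=1}^H \mathbb{E}_{(s,a) \sim d^\pi_h} \mathbb{E}_{s' \sim P_h(\cdot|s,a)} [\Phi_{h+1}(s')]$.
\end{itemize}

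The key step is the flow identity for the visitation distributions: $\sum_{s,a} d^\pi_h(s,a) P_h(s'|s,a) = d^\pi_{h+1}(s')$ for $h \le H-1$. This holds because the next state is drawn from $P_h$ given $(s_h, a_h)$, whatever the policy. Using it, the forward terms become $\sum_{h=1}^{H} \mathbb{E}_{s \sim d^\pi_{h+1}} [\Phi_{h+1}(s)]$. The $h = H$ summand vanishes because $\Phi_{H+1} \equiv 0$, so there is no need to define $d^\pi_{H+1}$.

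The remaining sum is $\sum_{h=2}^{H} \mathbb{E}_{d^\pi_h}[\Phi_h]$, which cancels against the negative potential terms except for the $h=1$ term, $-\mathbb{E}_{s \sim d^\pi_1}[\Phi_1(s)]$. Since $d^\pi_1 = \rho$ for every policy, this leftover term equals $-\mathbb{E}_{\rho}[\Phi_1]$ and is independent of $\pi$, which gives the displayed identity.

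An equivalent route is backward induction on $h$, proving $V^{\pi}_{\widetilde{r},h}(s) = V^{\pi}_{r,h}(s) - \Phi_h(s)$. Either approach works; I would present the telescoping version.

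The only point needing care is the indexing at the boundary, namely handling $\Phi_{H+1} \equiv 0$ and $d^\pi_1 = \rho$ correctly. Beyond that, the argument is a routine telescoping computation with no real obstacle.
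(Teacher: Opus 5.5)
Your proposal is correct and matches the paper's proof. The paper likewise establishes $V^{\pi}_{\widetilde{r}} = V^{\pi}_{r} - \mathbb{E}_{s_1 \sim \rho}[\Phi_1(s_1)]$: it uses the tower property to replace $\mathbb{E}_{s' \sim P_h(\cdot|s_h,a_h)}[\Phi_{h+1}(s')]$ with $\Phi_{h+1}(s_{h+1})$, telescopes using $\Phi_{H+1} \equiv 0$, and then subtracts the identities for $\pi$ and $\pi^\prime$; this is your visitation-flow argument written at the trajectory level.
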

The insight is that while individual policy values may differ between the original and shaping rewards, their relative differences remain invariant. Intuitively, according to the telescoping argument, the policy values of the original reward and the shaping reward only differ in the shaping value at the initial state, which cancels out when computing value differences. \cref{prop:reward_shaping} has an important implication for our reward pretraining approach. It establishes that

\begin{equation*}
\begin{aligned}
    &\quad (V^{\piE}_{r^*} - V^{\pi}_{r^*}) - (V^{\piE}_{r} - V^{\pi}_{r}) \\
    & = (V^{\piE}_{\tilde{r}^*} - V^{\pi}_{\tilde{r}^*}) - (V^{\piE}_{r} - V^{\pi}_{r}),
\end{aligned}
\end{equation*}

where $\widetilde{\truereward}$ is certain shaping reward of $\truereward$. This reveals that learning a reward function close to any \emph{shaping} reward $\widetilde{r}^\star$ is sufficient for reducing the relative policy evaluation error. As such, we do not need to recover the original reward $r^\star$ itself. 

Having established that learning an accurate shaping reward is sufficient for reducing the reward error, we now develop a principled method to infer such a reward. Our key insight is that the log-probability of the expert policy is exactly a shaping reward. To see this, we take the logarithm of Eq.(\ref{eq:maxent_optimal_policy}) and obtain
\begin{align}
\label{eq:log_maxent_optimal_policy}
    \log (\pi_h^{\text{E}} (a | s)) = Q_h^{\star, \text{soft}}(s, a)-V_h^{\star, \text{soft}}(s).
\end{align}
Furthermore, the soft Bellman equation gives
\begin{align*}
    Q_h^{*, \text { soft }}(s, a)=r_h^*(s, a)+\mathbb{E}_{s^{\prime} \sim P_h(\cdot \mid s, a)}\left[V_{h+1}^{*, \text { soft }}\left(s^{\prime}\right)\right].
\end{align*} 
Plugging the above equation into Eq.(\ref{eq:log_maxent_optimal_policy}) yields the following characterization.

\begin{equation*}
\begin{aligned}
    \log (\piE_h (a|s))  &= r^\star_h (s, a)  - V^{\star, \text{soft}}_h (s) \\
    &\quad + \mathbb{E}_{s^\prime \sim P_h (\cdot|s, a)} [ V^{\star, \text{soft}}_{h+1} (s^\prime) ].
\end{aligned}
\end{equation*}

Crucially, we observe that $ \widetilde{r^\star_h} (s, a) = \log (\piE_h (a|s))$ is exactly a shaping reward of $r^\star_h (s, a)$ with respect to the potential-based shaping functions $\{ V^{\star, \text{soft}}_h \}_{h=1}^{H+1}$. This shaping reward has an intuitive interpretation: it assigns greater values to actions with higher probabilities under the expert.

This reward shaping characterization naturally motivates our reward pretraining approach. We first learn a BC policy $\pi^{\bc}$, then initialize the reward as $r^1_h(s, a) = \log \pi^{\bc}_h(a|s)$, reusing the BC log-likelihood as a separate reward signal. Since $\pi^{\bc}_h(a|s)$ approximates $\pi^{\text{E}}_h(a|s)$ via maximum likelihood estimation, the pretrained reward $r^1_h(s, a)$ is close to the target shaping reward $\widetilde{r^\star_h}(s, a)$.

It is worth noting that while $ \pi^{\bc} $ suffers from compounding errors during long-horizon rollouts, this does not meaningfully degrade the quality of the pretrained reward. Compounding errors are a phenomenon of \emph{execution}: they arise during sequential, auto-regressive rollouts due to covariate shift, where small mistakes accumulate over time. The shaping reward $r^1_h(s, a) = \log \pi^{\bc}_h(a|s)$, by contrast, is a matter of \emph{evaluation}---it is assessed locally on individual state-action pairs, entirely bypassing sequential rollout. Since $\pi^{\bc}$ is trained via maximum likelihood, its single-step predictions $\log \pi^{\bc}_h(a|s)$ remain accurate near the expert distribution, yielding a dense and informative signal that steers the agent toward high-probability expert regions. This provides an effective warm start for the online AIL process, which then leverages online interactions to correct any residual errors. Combining AIL with this joint pretraining of policies and rewards yields our overall algorithm, \textbf{AIL} with Policy-Reward \textbf{Co}-\textbf{P}re\textbf{t}raining (\textbf{CoPT-AIL}), outlined in \cref{alg:ail_with_pretrained_pi_reward}. The key step that distinguishes CoPT-AIL from standard AIL with policy pretraining alone is highlighted in green.

At a high level, the reward-shaping-based analysis reveals a fundamental connection between the expert policy and shaping reward, enabling a unified approach to policy and reward pretraining. This integration allows us to derive both components from a single learning procedure, eliminating the need for a separate reward learning step. The resulting computational efficiency gains are particularly valuable when working with large-parameter models. Moreover, this connection suggests parameterizing rewards with policy models, which could be of independent interest.

\begin{algorithm}[htbp]
\caption{Adversarial Imitation Learning with Policy-Reward Co-Pretraining}
\label{alg:ail_with_pretrained_pi_reward}
\begin{algorithmic}[1]
\REQUIRE{Demonstrations $\gDE$.}
\STATE{Pretrain a policy via BC based on Eq.(\ref{eq:bc_objective}): $\pi^1 \leftarrow \pi^{\bc}$.}
\STATE{\AlgHL{Copy the BC policy as an independent reward:}}
\label{alg_line:reward_pretrain}
\AlgHL{$r^1_h (s, a) = \log (\pi^{\bc}_h (a|s))$.}
\FOR{$k = 1, 2, \ldots, K-1$}
\STATE{Calculate Q-value function $\{ Q^{\pi^k, r^k}_h \}_{h=1}^H$ for policy $\pi^k$.}
\STATE{Update the policy by KL-regularized policy optimization: \\
$\begin{aligned}
    \pi^{k+1}_h (\cdot|s) = \argmax_{p \in \Delta (\mathcal{A})} \Big\{ & \mathbb{E}_{a \sim p (\cdot)} [ Q^{\pi^k, r^k}_h (s, a) ] \\
    & - \frac{1}{\eta} \KL ( p (\cdot), \pi^k_h (\cdot|s) ) \Big\}.
\end{aligned}$}
\STATE{Update the reward by solving the optimization: \\
$\begin{aligned}
    r^{k+1} = \argmin_{r \in \mathcal{R}} \bigg\{ & \mathbb{E}_{\tau \sim \pi^{k+1}} \Big[ \sum_{h=1}^H r_h (s_h, a_h) \Big] \\
    & - \mathbb{E}_{\tau \sim \gDE} \Big[ \sum_{h=1}^H r_h (s_h, a_h) \Big] \bigg\}.
\end{aligned}$}
\ENDFOR
\ENSURE{$\bar{\pi}$ sampled uniformly from $\{\pi^1, \ldots, \pi^{K} \}$.}
\end{algorithmic}
\end{algorithm}

\subsection{Theoretical Analysis}

We now provide a rigorous theoretical analysis establishing the superiority of CoPT-AIL.

\begin{thm}
\label{thm:ail_pretrain_reward_policy}
Consider adversarial imitation learning with policy-reward co-pretraining (\cref{alg:ail_with_pretrained_pi_reward}). For any fixed $\delta \in (0, 1)$, with probability at least $1-\delta$, the reward error satisfies
\begin{equation}
\label{eq:reward_error_bound}
\begin{split}
&\frac{1}{K} \bigg( V^{\piE}_{\truereward} - V^{\pi^1}_{\truereward} - \big( V^{\piE}_{r^1} - V^{\pi^1}_{r^1} \big) \bigg) \\
&\quad \precsim \frac{C |\gS| |\gA| H^2 \log^2 \left( |\gS| |\gA| H N^2 / \delta\right) }{K N},
\end{split}
\end{equation}
where $C:= \max_{(s, h) \in \gS \times [H]} d^{\pi^{\bc}}_h (s) / d^{\piE}_h (s)$. Furthermore, the imitation gap satisfies
\begin{equation}
\label{eq:ail_pretrain_reward_final_bound}
\begin{split}
    V^{\piE} - V^{\widebar{\pi}} 
&\precsim \frac{C |\gS| |\gA| H^2 \log^2 \left( |\gS| |\gA| H N^2 / \delta\right) }{K N} \\
&\quad + \sqrt{\frac{ |\gS| |\gA| H^2 \log (H/\delta) }{N}}  
\\
&\quad + \sqrt{\frac{|\gS| |\gA| H^4 \log^2 (H N^2/\delta)}{K N}}. 
\end{split}
\end{equation}
\end{thm}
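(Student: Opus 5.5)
The plan has two parts: bound the reward error in Eq.~(\ref{eq:reward_error_bound}), then substitute it into Eq.~(\ref{eq:ail_initial_bound}) of \cref{prop:ail_with_pretrained_pi}. The policy-error and optimization terms there are unchanged from the policy-only analysis, since \cref{alg:ail_with_pretrained_pi_reward} differs from \cref{alg:ail_with_pretrained_pi} only in the initial reward. One caveat is that $r^1=\log\pi^{\bc}$ is not in $\gR$. I would first check that the regret argument behind Eq.~(\ref{eq:ail_initial_bound}) uses $r^1$ only through the first-round term $V^{\piE}_{r^1}-V^{\pi^1}_{r^1}$, or else work with a bounded, clipped version of $\log\pi^{\bc}$. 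With the reward-error bound in hand, the same choice of $\eta$ and the BC guarantee of \citet{demonstration24tiapkin} for the KL policy error give Eq.~(\ref{eq:ail_pretrain_reward_final_bound}) directly.

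For the reward error, \cref{prop:reward_shaping} with potentials $\{V^{\star,\soft}_h\}$ lets me replace $\truereward$ by the shaping reward $\widetilde{r}^\star_h(s,a)=\log\piE_h(a|s)$. The term to bound is therefore
\[
\big(V^{\piE}_{\log\piE}-V^{\pi^1}_{\log\piE}\big)-\big(V^{\piE}_{\log\pi^{\bc}}-V^{\pi^1}_{\log\pi^{\bc}}\big),
\]
with $\pi^1=\pi^{\bc}$. Regrouping by policy, it equals
\begin{align*}
&\mathbb{E}_{\piE}\Big[\sum_{h=1}^H \log\frac{\piE_h(a_h|s_h)}{\pi^{\bc}_h(a_h|s_h)}\Big]-\mathbb{E}_{\pi^{\bc}}\Big[\sum_{h=1}^H \log\frac{\piE_h(a_h|s_h)}{\pi^{\bc}_h(a_h|s_h)}\Big]\\
&\quad=\sum_{h=1}^H \mathbb{E}_{s\sim d^{\piE}_h}\big[\KL(\piE_h(\cdot|s),\pi^{\bc}_h(\cdot|s))\big]+\sum_{h=1}^H \mathbb{E}_{s\sim d^{\pi^{\bc}}_h}\big[\KL(\pi^{\bc}_h(\cdot|s),\piE_h(\cdot|s))\big].
\end{align*}
Each term is a per-step KL divergence, which is an evaluation quantity and carries no compounding through rollouts.

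To handle the second sum, I change measure from $d^{\pi^{\bc}}_h$ to $d^{\piE}_h$ at the cost of the factor $C$. This gives
\[
\sum_{h=1}^H \mathbb{E}_{s\sim d^{\piE}_h}\big[\KL(\piE_h,\pi^{\bc}_h)+C\,\KL(\pi^{\bc}_h,\piE_h)\big]
\le (1+C)\sum_{h=1}^H \mathbb{E}_{s\sim d^{\piE}_h}\big[\KL(\piE_h,\pi^{\bc}_h)+\KL(\pi^{\bc}_h,\piE_h)\big],
\]
which is the expected symmetrized KL between $\piE$ and the MLE under the expert's state distribution.

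The main obstacle is bounding this symmetrized KL at rate $|\gS||\gA|H\log^2(\cdot)/N$. The reverse KL $\KL(\pi^{\bc},\piE)$ blows up when $\piE$ puts small mass on actions, and the forward KL blows up when the BC estimate assigns zero mass to unseen actions. I would use the smoothed (add-constant) MLE as in \citet{demonstration24tiapkin}, which already gives forward KL of order $|\gS||\gA|H\log(\cdot)/N$ in expectation over $d^{\piE}$. To control the reverse direction, I would bound both directions by a $\chi^2$-type quantity $\sum_a(\pi^{\bc}-\piE)^2/\min(\pi^{\bc},\piE)$. On well-visited states, Bernstein concentration of empirical action frequencies with a union bound over $(s,h)$ gives the rate, and the extra logarithm comes from the smoothing level, which keeps log-ratios of order $\log(N)$. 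States with few visits contribute $d^{\piE}_h(s)\cdot O(\log N)\lesssim \log(\cdot)/N$ each; summing over $|\gS|H$ such states yields the same order. Dividing by $K$ then gives Eq.~(\ref{eq:reward_error_bound}), and substituting into Eq.~(\ref{eq:ail_initial_bound}) finishes the proof.
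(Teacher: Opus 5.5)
\textbf{The gap is in the reverse-KL step.} Your reduction is the paper's: shaping via \cref{prop:reward_shaping}, the split into a forward KL under $d^{\piE}$ plus a reverse KL under $d^{\pi^{\bc}}$, and the change of measure at cost $C$. The trouble is the reverse term $\KL(\pi^{\bc}_h(\cdot|s),\piE_h(\cdot|s))$. Smoothing lower-bounds $\pi^{\bc}$, which controls the forward log-ratio $\log(\piE/\pi^{\bc})$. The reverse KL instead involves $\log(\pi^{\bc}/\piE)$, and smoothing forces $\pi^{\bc}_h(a|s)\ge 1/(N_h(s)+|\gA|)$ exactly on actions where $\piE$ may be tiny. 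So two of your claims fail in the reverse direction: that log-ratios are $O(\log N)$, and that rarely visited states contribute $d^{\piE}_h(s)\cdot O(\log N)$. At an unvisited state the contribution is $\KL(p,\piE_h(\cdot|s))$ with $p$ uniform, and no function of $N$ bounds it.

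\textbf{Missing ingredient: a lower bound on $\piE$.} This comes from soft-optimality. Since $Q^{\star,\soft}_h\ge 0$ and $V^{\star,\soft}_h\le (H-h+1)(1+\log|\gA|)$, one gets $\log\piE_h(a|s)\ge -H(1+\log|\gA|)$. Hence $\KL(p,\piE_h(\cdot|s))\le H\log(4|\gA|)$, which is \cref{lem:kl_bound}. In the worst case $\log(1/\min_a\piE_h(a|s))$ really is of order $H$. That extra factor is why the theorem carries $H^2$, and your target rate $|\gS||\gA|H\log^2(\cdot)/N$ for the symmetrized KL omits it.

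\textbf{The $\chi^2$ relaxation is too lossy here, even with that lower bound.} Take an action with $\piE_h(a|s)=\epsilon$ and $N_h(s)\epsilon\ll 1$. The smoothing bias gives $(\pi^{\bc}_h(a|s)-\epsilon)^2/\epsilon\approx 1/\big((N_h(s)+|\gA|)^2\epsilon\big)$. Under soft-optimality $\epsilon$ can be of order $e^{-H}$, so this bound is exponential in $H$. Bernstein concentration cannot help, because this is deterministic bias rather than sampling fluctuation. The KL itself pays only $\frac{1}{N_h(s)+|\gA|}\log\frac{1}{\epsilon}=O\big(H\log|\gA|/N_h(s)\big)$.

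\textbf{How the paper keeps the logarithmic dependence.} It proceeds in three steps.
\begin{enumerate}
\item It writes the Laplace estimator as the mixture $\frac{N_h(s)}{N_h(s)+|\gA|}\widehat{\pi}_h(\cdot|s)+\frac{|\gA|}{N_h(s)+|\gA|}p$ and applies convexity of KL.
\item It bounds $\KL(\widehat{\pi}_h(\cdot|s),\piE_h(\cdot|s))$ by Sanov's theorem, which needs no lower bound on $\piE$, and bounds the uniform part by \cref{lem:kl_bound}. This gives \cref{lem:bc_concentration}.
\item It sums over states using $d^{\piE}_h(s)/\max\{N_h(s),1\}\lesssim\log(|\gS|H/\delta)/N$ from \cref{lemma:binomial-concentration}, yielding the $C|\gS||\gA|H^2\log^2(\cdot)/N$ term.
\end{enumerate}

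\textbf{On $r^1\notin\gR$.} Your caveat is well taken: the paper's mirror-descent step silently uses $\|Q^{\pi^1,r^1}_h\|_\infty\le H$. It is harmless, since $\log\pi^{\bc}\in[-\log(N+|\gA|),0]$. This costs only an additive $O\big(\eta H^3\log^2(N+|\gA|)/K\big)$ from the first round.
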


\begin{figure*}[t]
    \centering
    \includegraphics[width=0.9\linewidth]{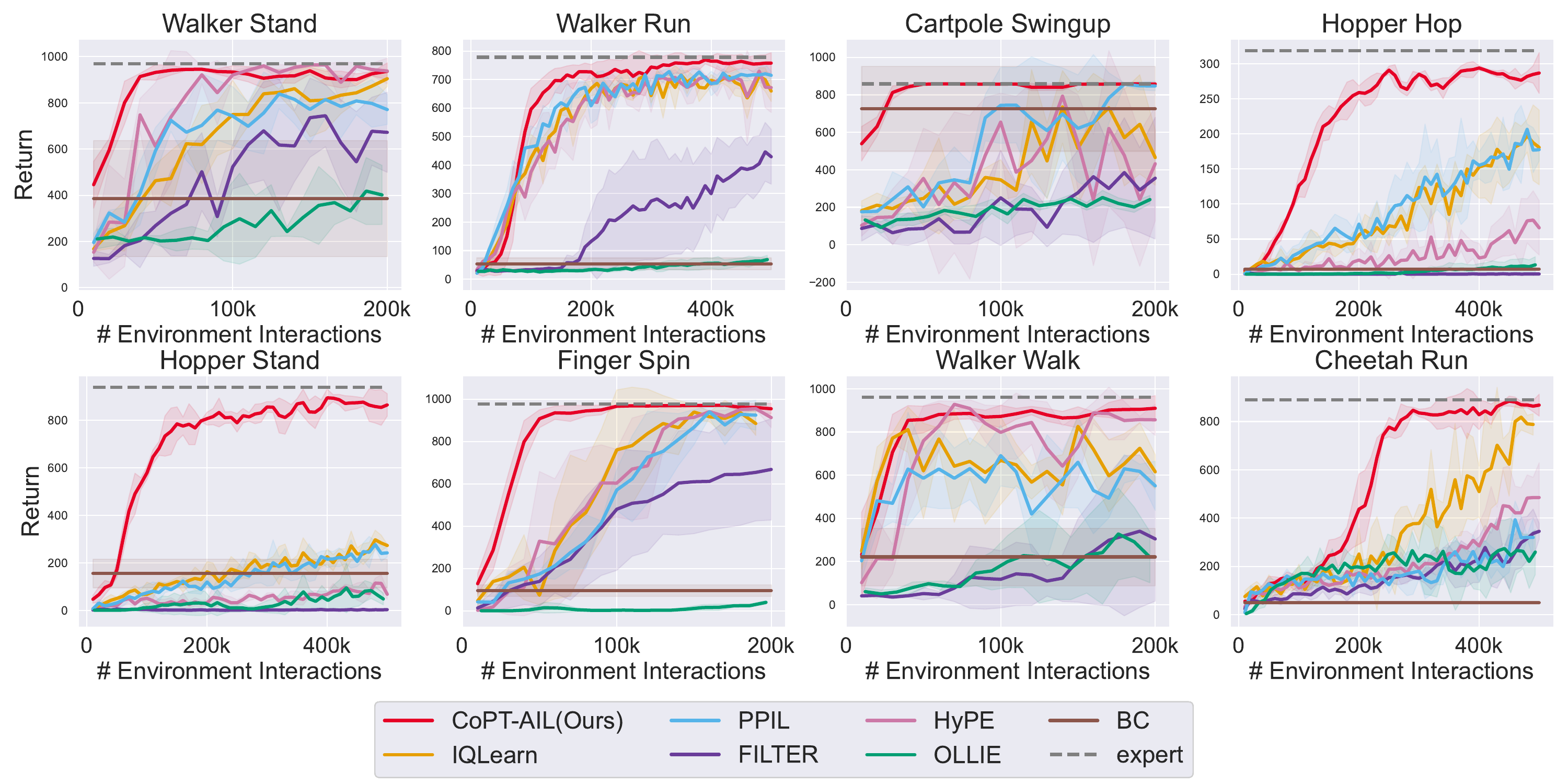}
    \caption{Learning curves with respect to online environment interactions on 8 DMControl tasks. Here the $x$-axis is the number of environment interactions and the $y$-axis is the return.}
    \label{fig:baseline_result}
\end{figure*}

The complete proof is deferred to \cref{subsec:proof_of_thm_pretrain_reward_policy}. \cref{thm:ail_pretrain_reward_policy} shows that reward pretraining reduces the reward error to $\widetilde{\gO}(C |\gS| |\gA| H^2 / (KN))$, which decays rapidly with the number of expert trajectories $N$. This confirms that our approach effectively leverages expert demonstrations to obtain a high-quality initial reward.

Turning to the overall imitation gap, CoPT-AIL achieves a bound of $\widetilde{\gO}( (C |\gS| |\gA| H^2)/(KN) + \sqrt{|\gS| |\gA| H^2 /N} + \sqrt{|\gS| |\gA| H^4/(KN)})$. By comparison, \citet{shani21online-al} established that standard AIL without pretraining achieves $\widetilde{\gO} (\sqrt{|\gS| |\gA| H^2 / K} + \sqrt{|\gS| |\gA| H^3 / N} + \sqrt{H^4 / K})$. CoPT-AIL therefore yields a strictly better bound whenever the number of expert trajectories satisfies $N \succsim C\sqrt{|\gS| |\gA| H^2 / K}$\footnote{A detailed term-by-term comparison is provided in \cref{subsec:bound_comparison}.}. Intuitively, when sufficiently many demonstrations are available, jointly pretraining both the policy and the reward produces a strong initialization that accelerates the subsequent AIL process. To our knowledge, \cref{thm:ail_pretrain_reward_policy} provides the first theoretical guarantee for the performance benefits of pretraining in AIL.

\begin{figure*}[h]
    \centering
    \includegraphics[width=0.9\linewidth]{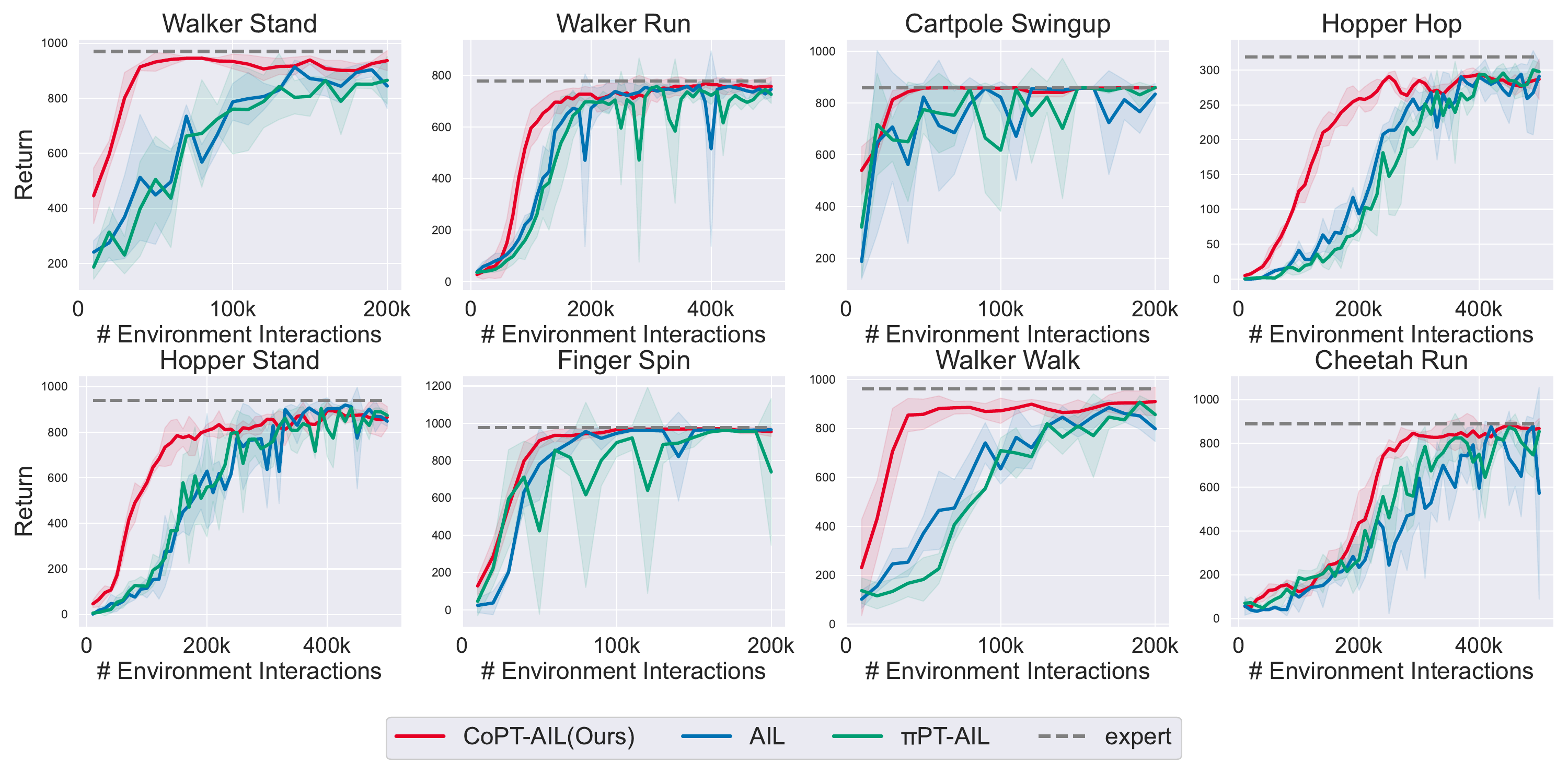}
    \caption{Learning curves with respect to online environment interactions on 8 DMControl tasks. Here the $x$-axis is the number of environment interactions and the $y$-axis is the return.}
    \label{fig:ablation_result}
\end{figure*}

\section{Related Work}

\paragraph{Adversarial Imitation Learning.}
AIL \citep{pieter04apprentice, syed07game, ho2016gail, ghasemipour2019divergence, Kostrikov19dac} is a prominent class of IL methods that frames imitation as a game-theoretic optimization problem. Despite its high imitation quality, AIL typically requires extensive online environment interaction. To improve interaction efficiency, recent work has explored combining AIL with BC \citep{jena2021augmenting, haldar2023watch, watson2023coherent, yue2024ollie}. Some approaches augment the AIL objective directly with a BC term \citep{jena2021augmenting, haldar2023watch}, while others incorporate prior policies \citep{watson2023coherent} or supplementary datasets \citep{yue2024ollie} to warm-start the reward function. However, these methods generally lack theoretical guarantees for the benefits they claim. This paper addresses that gap by providing formal guarantees for the efficiency gains of our proposed approach.

On the theoretical side, several works have analyzed the convergence of AIL in the online setting \citep{syed07game, xu2021error, shani21online-al, liu2021provably, xu2023provably, viano2022proximal, viano2024better,zhang2025improving, xu2026understanding, xu2026adversarial}. In particular, \citet{shani21online-al} propose applying online optimization \citep{shalev2007online} to jointly update the policy and reward, and establish an imitation gap bound in the tabular setting. This line of work has since been extended to function approximation \citep{liu2021provably, viano2024better, xu2026adversarial}, and \citet{xu2026understanding} recently proved a low-data-regime bound that theoretically explains the strong performance of AIL with limited demonstrations. Notably, all of these works analyze AIL initialized with random policies and rewards, focusing exclusively on the iterative online learning process. None characterizes the error attributable to initialization, nor provides theoretical guarantees for any pretraining strategy. This work fills that gap, offering the first systematic theoretical treatment of reward pretraining in AIL.

\paragraph{Inverse Reinforcement Learning.}
IRL \citep{ng00irl, arora2021survey} aims to recover the underlying reward function from expert demonstrations. Our reward pretraining method is situated within the offline IRL literature \citep{garg2021iq-learn, sheng2023clare, zeng2023demonstrations, wei2023bayesian}. Unlike most prior approaches, which require a supplementary non-expert dataset to learn the reward \citep{sheng2023clare, zeng2023demonstrations, wei2023bayesian}, our method operates on expert demonstrations alone. While purely offline IRL methods exist \citep{Kostrikov20value_dice, garg2021iq-learn, li2025generalist}, our approach is distinctive in that it jointly extracts both the reward function and the policy from a single BC procedure, eliminating the need for a separate IRL step.

\section{Experiment}
This section validates the superiority of CoPT-AIL through experiments. We provide a brief overview of the experimental setup below, with detailed information available in Appendix \ref{sec:experiment_details}. The implementation of CoPT-AIL is available at \href{https://github.com/LAMDA-RL/CoPT-AIL}{https://github.com/LAMDA-RL/CoPT-AIL}.

\subsection{Experiment Setup}
\label{subsec:experiment_set_up}
\paragraph{Environment.} We conduct experiments across 8 tasks from the feature-based DMControl benchmark \citep{tassa2018deepmind}, a widely adopted benchmark in imitation learning that provides diverse continuous control tasks. For each task, we train an agent using the online RL algorithm DrQ-v2 \citep{yarats2021mastering} with sufficient environment interactions and treat the resulting policy as the expert policy. We then collect expert demonstrations by rolling out this expert policy. Each algorithm is evaluated across three trials with different random seeds, and policy performance is assessed using Monte Carlo approximation over 10 trajectories per evaluation.
\paragraph{Baselines.} We compare CoPT-AIL against established deep imitation learning methods, including BC \citep{Pomerleau91bc}, IQLearn \citep{garg2021iq-learn}, PPIL \citep{viano2022proximal}, FILTER \citep{swamy2023inverse}, HyPE \citep{ren2024hybrid}, and OLLIE \citep{yue2024ollie}\footnote{OLLIE originally operates in IL with a supplementary dataset. To ensure a fair comparison within our pure online IL setting, we follow the practice recommended in the OLLIE paper to set the supplementary dataset as an empty set.}, although most lack theoretical guarantees. Notably, FILTER, PPIL, and HyPE represent prior state-of-the-art (SOTA) deep AIL approaches and OLLIE represents a recent AIL method incorporating reward pretraining. Implementation details are provided in Appendix~\ref{sec:experiment_details}.

\subsection{Experiment Results}

\paragraph{Overall Performance.} \cref{fig:baseline_result} presents the learning curves with respect to online environment interactions for different algorithms. The results reveal that CoPT-AIL consistently matches or exceeds the convergence rates of prior SOTA AIL methods across all 8 tasks. Particularly, on \texttt{Cartpole Swingup}, \texttt{Hopper Hop}, \texttt{Hopper Stand} and \texttt{Finger Spin}, CoPT-AIL can achieve near-expert performance with significantly fewer online interactions than existing approaches. These empirical results corroborate our theoretical analysis that the proposed joint pretraining mechanism yields a superior imitation gap in CoPT-AIL.

\paragraph{Ablation Study.} To validate the effectiveness of our proposed joint pretraining mechanism, we conduct an ablation study comparing CoPT-AIL against two baselines: pure AIL without pretraining (AIL) and AIL with policy pretraining alone ($\pi$PT-AIL). \cref{fig:ablation_result} presents the learning curves for these three algorithms. The results reveal that $\pi$PT-AIL achieves convergence rates similar to standard AIL, indicating limited improvement in interaction efficiency from policy pretraining alone. Furthermore, $\pi$PT-AIL exhibits instability, particularly on \texttt{Cartpole Swingup} and \texttt{Finger Spin} tasks. In contrast, CoPT-AIL demonstrates faster and more stable convergence across 8 tasks, with particularly pronounced improvements on \texttt{Walker Stand} and \texttt{Walker Walk}.

\section{Conclusion}

This paper proposes a  principled policy-reward joint pretraining method that provably accelerates AIL. We begin with a theoretical analysis of AIL with policy pretraining alone, which isolates reward error as the key theoretical bottleneck and establishes a tight baseline. Guided by this diagnosis, we derive a reward pretraining method grounded in reward-shaping theory, which uncovers a fundamental connection between the expert policy and the shaping reward. This connection naturally gives rise to CoPT-AIL, which jointly pretrains both the policy and the reward through a single BC procedure. Theoretically, CoPT-AIL achieves an improved imitation gap bound over standard AIL without pretraining---providing the first formal guarantee for the benefits of pretraining in AIL. Empirically, CoPT-AIL consistently outperforms prior AIL methods across a range of continuous control tasks.

Looking ahead, several promising directions remain open. First, as an initial step toward understanding the theoretical role of pretraining in AIL, this work focuses on the tabular setting; extending the theoretical results to function approximation is a natural and important next step. Second, it would be interesting to apply CoPT-AIL to more complex robot learning tasks \citep{tang2025relam}, particularly those leveraging foundation models such as vision-language-action architectures.

\section*{Acknowledgements}
The authors thank Kaiyuan Li for running a baseline. This work was supported by the Yangtze River Delta Science and Technology Innovation Community Joint Research Program (No.\ 2024CSJZN00302), the Jiangsu Science Foundation (No.\ BK20243039), the Fundamental and Interdisciplinary Disciplines Breakthrough Plan of the Ministry of Education of China (No.\ JYB2025XDXM118), and the ``111 Center'' (No.\ B26023).

\section*{Impact Statement}
This work improves the theoretical understanding and data efficiency of adversarial imitation learning by introducing principled policy–reward co-pretraining. By reducing reliance on extensive environment interaction, our method may benefit real-world applications where data collection is costly or risky.
As with other imitation learning approaches, the proposed method may inherit biases from expert demonstrations, highlighting the importance of responsible data collection and evaluation. Overall, we hope this work contributes to more reliable and efficient imitation learning systems.

\bibliography{reference}
\bibliographystyle{icml2026}

\newpage
\appendix
\onecolumn
\section{Omitted Proof}
\label{sec:omitted_proof}

\subsection{Proof of Proposition \ref{prop:ail_with_pretrained_pi}}
\label{subsec:proof_of_prop_ail_with_pretrained_pi}
\begin{proof}

First, we can decompose the imitation gap into the following two terms.
\begin{align}
        V^{\piE} - V^{\widebar{\pi}} &= \frac{1}{K} \sum_{k=1}^K \lp V^{\piE}_{\truereward} - V^{\pi^k}_{\truereward} \rp \nonumber
        \\
        &= \frac{1}{K} \sum_{k=1}^K \lp  V^{\piE}_{\truereward} - V^{\pi^k}_{\truereward} - \lp V^{\piE}_{r^k} - V^{\pi^k}_{r^k} \rp \rp  + \frac{1}{K} \sum_{k=1}^K \lp V^{\piE}_{r^k} - V^{\pi^k}_{ r^k} \rp . 
        \label{eq:error_decomposition}
\end{align}
We first analyze the first term in the RHS. 
\begin{align*}
    &\quad \frac{1}{K} \sum_{k=1}^K \lp  V^{\piE}_{\truereward} - V^{\pi^k}_{\truereward} - \lp V^{\piE}_{r^k} - V^{\pi^k}_{r^k} \rp \rp
    \\
    &= \frac{1}{K} \lp  V^{\piE}_{\truereward} - V^{\pi^1}_{\truereward} - \lp V^{\piE}_{r^1} - V^{\pi^1}_{r^1} \rp \rp  + \frac{1}{K} \sum_{k=2}^K \lp  \widehat{V}^{\piE}_{\truereward} - V^{\pi^k}_{\truereward} - \lp \widehat{V}^{\piE}_{r^k} - V^{\pi^k}_{r^k} \rp \rp + V^{\piE}_{\truereward} - \widehat{V}^{\piE}_{\truereward} 
    \\
    &\; + \frac{1}{K} \sum_{k=2}^K \lp \widehat{V}^{\piE}_{r^k} -V^{\piE}_{r^k}  \rp
    \\
    &\leq \frac{1}{K} \lp  V^{\piE}_{\truereward} - V^{\pi^1}_{\truereward} - \lp V^{\piE}_{r^1} - V^{\pi^1}_{r^1} \rp \rp + \frac{1}{K} \sum_{k=2}^K \lp  \widehat{V}^{\piE}_{\truereward} - V^{\pi^k}_{\truereward} - \lp \widehat{V}^{\piE}_{r^k} - V^{\pi^k}_{r^k} \rp \rp 
    \\
    &\; + 2 \max_{r \in \gR} \labs V^{\piE}_{r} - \widehat{V}^{\piE}_{r} \rabs.  
\end{align*}
For any reward $r \in \gR$, $\widehat{V}^{\piE}_r := \expect_{\tau \sim \gDE} \ls \sum_{h=1}^H r_h (s_h, a_h) \rs$ denotes the empirical estimation of $V^{\piE}_r$ based on demonstrations $\gDE$. Furthermore, $\forall r \in \gR$, we have that
\begin{align*}
    \labs V^{\piE}_{r} - \widehat{V}^{\piE}_{r} \rabs &= \labs \sum_{h=1}^H \sum_{(s, a) \in \gS \times \gA} \lp d^{\piE}_h (s, a) - \widehat{d^{\piE}_h} (s, a) \rp r_h (s, a)  \rabs
    \\
    &\leq  \sum_{h=1}^H \sum_{(s, a) \in \gS \times \gA} \labs d^{\piE}_h (s, a) - \widehat{d^{\piE}_h} (s, a) \rabs \labs r_h (s, a)  \rabs
    \\
    &\leq \sum_{h=1}^H \sum_{(s, a) \in \gS \times \gA} \labs d^{\piE}_h (s, a) - \widehat{d^{\piE}_h} (s, a) \rabs. 
\end{align*}
Here $d^{\piE}_h (s, a) := \sP^{\piE} (s_h=s, a_h=a)$ represents the probability of visiting $(s, a)$ in time step $h$ by following $\piE$. Besides, $\widehat{d^{\piE}_h} (s, a) := n^{\expert}_h (s, a) / N$ represents the empirical estimation based on demonstrations $\gDE$, where $n^{\expert}_h (s, a)$ denotes the number of times that $(s, a)$ is visited in time step $h$ in $\gDE$. The last inequality follows that $r_h (s, a) \in [0, 1]$.

Based on \citep{weissman2003inequalities} and union bound, for any $\delta \in (0, 1)$, with probability at least $1-\delta$, we have that
\begin{align*}
    \forall h \in [H], \; \sum_{(s, a) \in \gS \times \gA} \labs d^{\piE}_h (s, a) - \widehat{d^{\piE}_h} (s, a) \rabs \leq \sqrt{\frac{2 |\gS| |\gA| \log (H/\delta) }{N}}.
\end{align*}
Then it holds that
\begin{align*}
    \forall r \in \gR, \; \labs V^{\piE}_{r} - \widehat{V}^{\piE}_{r} \rabs \leq H \sqrt{\frac{2 |\gS| |\gA| \log (H/\delta) }{N}}. 
\end{align*}
Then we can obtain the following upper bound.
\begin{align}
    &\quad \frac{1}{K} \sum_{k=1}^K \lp  V^{\piE}_{\truereward} - V^{\pi^k}_{\truereward} - \lp V^{\piE}_{r^k} - V^{\pi^k}_{r^k} \rp \rp \nonumber
    \\
    &\leq \frac{1}{K} \lp  V^{\piE}_{\truereward} - V^{\pi^1}_{\truereward} - \lp V^{\piE}_{r^1} - V^{\pi^1}_{r^1} \rp \rp + \frac{1}{K} \sum_{k=2}^K \lp  \widehat{V}^{\piE}_{\truereward} - V^{\pi^k}_{\truereward} - \lp \widehat{V}^{\piE}_{r^k} - V^{\pi^k}_{r^k} \rp \rp \nonumber 
    \\
    &\; + 2 H \sqrt{\frac{2 |\gS| |\gA| \log (H/\delta) }{N}} \nonumber
    \\
    &\leq \frac{1}{K} \lp  V^{\piE}_{\truereward} - V^{\pi^1}_{\truereward} - \lp V^{\piE}_{r^1} - V^{\pi^1}_{r^1} \rp \rp + 2 H \sqrt{\frac{2 |\gS| |\gA| \log (H/\delta) }{N}}. \label{eq:reward_error_bound_proof}  
\end{align}
The last inequality follows that $\forall k \geq 2, r^k = \argmin_{r \in \gR} V^{\pi^k}_{r} - \widehat{V}^{\piE}_r$.

We proceed to analyze the second term in the RHS of Eq.(\ref{eq:error_decomposition}). According to the policy difference lemma \citep{kakade2002pg}, we can obtain that
\begin{align*}
    \frac{1}{K} \sum_{k=1}^K \lp V^{\piE}_{r^k} - V^{\pi^k}_{ r^k} \rp &= \frac{1}{K} \sum_{k=1}^K \expect \ls \sum_{h=1}^H \langle Q^{\pi^k, r^k}_h (s_h, \cdot), \piE_h (\cdot|s_h) - \pi^k_h (\cdot|s_h)  \rangle \bigg| \piE  \rs
    \\
    &= \frac{1}{K}  \expect \ls \sum_{h=1}^H \sum_{k=1}^K \langle Q^{\pi^k, r^k}_h (s_h, \cdot), \piE_h (\cdot|s_h) - \pi^k_h (\cdot|s_h)  \rangle \bigg| \piE  \rs. 
\end{align*}
For each $(s, h) \in \gS \times [H]$, we analyze the error term of $\sum_{k=1}^K \langle Q^{\pi^k, r^k}_h (s, \cdot), \piE_h (\cdot|s) - \pi_h^k (\cdot|s)  \rangle$.
For a simplex $p \in \Delta (\gA)$, we define the linear function $\ell^k_{s, h} (p) := -\sum_{a \in \gA} p (a) Q^{\pi^k, r^k}_h (s, a)$. Then we can regard the above error term as the regret for the online optimization problem with loss functions $\{ \ell^k_{s, h} (p) \}_{k=1}^{K}$.
\begin{align*}
    \sum_{k=1}^K \langle Q^{\pi^k, r^k}_h (s, \cdot), \piE_h (\cdot|s) - \pi^k_h (\cdot|s)  \rangle = \sum_{k=1}^K \ell^k_{s, h} (\pi^k_h (\cdot|s)) -  \ell^k_{s, h} (\piE_h (\cdot|s)).
\end{align*}
Furthermore, performing KL-regularized policy optimization is equivalent to applying online mirror descent \citep{Orabona19a_modern_introduction_to_ol} on the loss functions $\{ \ell^k_{s, h} (p) \}_{k=1}^{K}$. According to the regret bound on online mirror descent (e.g., \citep[Theorem 6.8]{Orabona19a_modern_introduction_to_ol}), we have that
\begin{align*}
    \sum_{k=1}^K \ell^k_{s, h} (\pi^k_h (\cdot|s)) -  \ell^k_{s, h} (\piE_h (\cdot|s)) &\leq \frac{\KL (\piE_h (\cdot|s), \pi^1_h (\cdot|s))}{\eta} + \frac{\eta}{2} \sum_{k=1}^K \lnorm Q^{\pi^k, r^k}_h (s, \cdot)  \rnorm^2_{\infty}
    \\
    &\leq \frac{\KL (\piE_h (\cdot|s), \pi^1_h (\cdot|s))}{\eta} + \frac{\eta}{2} K H^2.
\end{align*}
The last inequality follows that $Q^{\pi^k, r^k}_h (s, a) \in [0, H]$ because of $r^k_h (s, a) \in [0, 1], \forall (s, a, h) \in \gS \times \gA \times [H]$. Then we can obtain that
\begin{align}
    \frac{1}{K} \sum_{k=1}^K \lp V^{\piE}_{r^k} - V^{\pi^k}_{ r^k} \rp &= \frac{1}{K}  \expect \ls \sum_{h=1}^H \sum_{k=1}^K \langle Q^{\pi^k, r^k}_h (s_h, \cdot), \piE_h (\cdot|s_h) - \pi^k (\cdot|s_h)  \rangle \bigg| \piE  \rs \nonumber
    \\
    &\leq \frac{1}{K}  \expect \ls \sum_{h=1}^H \frac{\KL (\piE_h (\cdot|s_h), \pi^1_h (\cdot|s_h))}{\eta} + \frac{\eta}{2} K H^2  \bigg| \piE  \rs \nonumber
    \\
    &= \frac{1}{\eta K}  \expect \ls \sum_{h=1}^H \KL (\piE_h (\cdot|s_h), \pi^1_h (\cdot|s_h))   \bigg| \piE  \rs + \frac{\eta}{2}  H^3 \nonumber
    \\
    &= \frac{1}{\eta K}  \expect \ls \sum_{h=1}^H \KL (\piE_h (\cdot|s_h), \pi^{\bc}_h (\cdot|s_h))   \bigg| \piE  \rs + \frac{\eta}{2}  H^3. \label{eq:policy_error_bound}
\end{align}
Combining the bounds in Eq.(\ref{eq:reward_error_bound_proof}) and Eq.(\ref{eq:policy_error_bound}) finishes the proof of Eq.(\ref{eq:ail_initial_bound}).
\begin{align*}
    V^{\piE} - V^{\widebar{\pi}} &\leq \frac{1}{K} \lp  V^{\piE}_{\truereward} - V^{\pi^1}_{\truereward} - \lp V^{\piE}_{r^1} - V^{\pi^1}_{r^1} \rp \rp + 2 H \sqrt{\frac{2 |\gS| |\gA| \log (H/\delta) }{N}} 
    \\
    &\; + \frac{1}{\eta K}  \expect \ls \sum_{h=1}^H \KL (\piE_h (\cdot|s_h), \pi^{\bc}_h (\cdot|s_h))   \bigg| \piE  \rs + \frac{\eta}{2}  H^3.  
\end{align*}
We proceed to prove Eq.(\ref{eq:ail_final_bound}) in Proposition \ref{prop:ail_with_pretrained_pi}. In particular, with the policy pre-trained via BC, we can leverage the guarantee of BC to analyze the KL divergence between the expert policy and the initial policy. Note that we have proved the following upper bound.
\begin{align*}
    \frac{1}{K} \sum_{k=1}^K \lp V^{\piE}_{r^k} - V^{\pi^k}_{ r^k} \rp \leq \frac{1}{\eta K}  \expect \ls \sum_{h=1}^H \KL (\piE_h (\cdot|s_h), \pi^{\bc}_h (\cdot|s_h))   \bigg| \piE  \rs + \frac{\eta}{2}  H^3. 
\end{align*}
According to \citep[Corollary 1]{demonstration24tiapkin}, with probability at least $1-\delta$, we have that
\begin{align*}
    \frac{1}{K} \sum_{k=1}^K \lp V^{\piE}_{r^k} - V^{\pi^k}_{ r^k} \rp &\leq \frac{6 |\gS| |\gA| H \cdot \log \left(2 \mathrm{e}^4 N\right) \cdot \log \left(12 H N^2 / \delta\right)}{ \eta K N} + \frac{18 A H}{\eta K N } + \frac{\eta}{2}  H^3
    \\
    &\leq \frac{24 |\gS| |\gA| H \cdot \log \left(2 \mathrm{e}^4 N\right) \cdot \log \left(12 H N^2 / \delta\right)}{ \eta K N} + \frac{\eta}{2}  H^3
    \\
    &\leq \frac{24 |\gS| |\gA| H \log^2 (2e^4 H N^2/\delta)}{ \eta K N } + \frac{\eta}{2}  H^3 
    \\
    &= 4 \sqrt{\frac{3 |\gS| |\gA| H^4 \log^2 (2e^4 H N^2/\delta)}{K N}}.
\end{align*}
The last equation holds by choosing the step-size $\eta = \sqrt{ (48 |\gS| |\gA| \log^2 (2e^4 H N^2/\delta)) / (H^2 K N)  }$. Finally, by union bound, we have that
\begin{align*}
     V^{\piE} - V^{\widebar{\pi}}
    &\leq \frac{1}{K} \lp  V^{\piE}_{\truereward} - V^{\pi^1}_{\truereward} - \lp V^{\piE}_{r^1} - V^{\pi^1}_{r^1} \rp \rp + 2 H \sqrt{\frac{2 |\gS| |\gA| \log (2H/\delta) }{N}} 
    \\
    &\; + 4 \sqrt{\frac{3 |\gS| |\gA| H^4 \log^2 (4e^4 H N^2/\delta)}{K N}}. 
\end{align*}
We complete the proof.

\end{proof}

\subsection{Proof of Proposition \ref{prop:reward_shaping}}
Recall the definition of shaping reward $\widetilde{r}_h (s, a) := r_h (s, a) - \Phi_h (s) + \expect_{s^\prime \sim P_h (\cdot|s, a)} [\Phi_{h+1} (s^\prime)]$. For any policy $\pi$, we have that
\begin{align*}
    V^{\pi}_{\widetilde{r}} &= \expect \ls \sum_{h=1}^H \widetilde{r}_h (s_h, a_h) \bigg| \pi \rs
    \\
    &= \expect \ls \sum_{h=1}^H \lp r_h (s_h, a_h) - \Phi_h (s_h) + \expect_{s^\prime \sim P_h (\cdot|s_h, a_h)} [\Phi_{h+1} (s^\prime)] \rp \bigg| \pi  \rs
    \\
    &\overset{\text{(a)}}{=} \expect \ls \sum_{h=1}^H \lp r_h (s_h, a_h) - \Phi_h (s_h) + \Phi_{h+1} (s_{h+1}) \rp \bigg| \pi  \rs
    \\
    &\overset{\text{(b)}}{=} \expect \ls \sum_{h=1}^H r_h (s_h, a_h) - \Phi_1 (s_1) \bigg| \pi  \rs
    \\
    &= \expect \ls \sum_{h=1}^H r_h (s_h, a_h)  \bigg| \pi  \rs - \expect_{s_1 \sim \rho} \ls \Phi (s_1) \rs
    \\
    &= V^{\pi}_{r} -  \expect_{s_1 \sim \rho} \ls \Phi (s_1) \rs. 
\end{align*}
Here Equation (a) follows the tower property and $s_{h+1} \sim P_h (\cdot|s_h, a_h)$. Equation (b) follows the telescoping argument with boundary condition $\Phi_{H+1} \equiv 0$. Then for any pair of policies $\pi$ and $\pi^\prime$, it holds that
\begin{align*}
    V^{\pi^\prime}_{r} -  V^{\pi}_{r} = (V^{\pi^\prime}_{\widetilde{r}} + \expect_{s_1 \sim \rho} \ls \Phi (s_1) \rs) - (V^{\pi}_{\widetilde{r}} + \expect_{s_1 \sim \rho} \ls \Phi (s_1) \rs) = V^{\pi^\prime}_{\widetilde{r}} - V^{\pi}_{\widetilde{r}}.  
\end{align*}
We complete the proof.

\subsection{Proof of Theorem \ref{thm:ail_pretrain_reward_policy}}
\label{subsec:proof_of_thm_pretrain_reward_policy}

We first analyze the reward error of $(1/K)\cdot (V^{\piE}_{\truereward} - V^{\pi^1}_{\truereward} - (V^{\piE}_{r^1} - V^{\pi^1}_{r^1}))$. 
Recall that $\widetilde{r^\star_h} (s, a) := \log (\piE_h (a|s))$ is exactly a shaping reward of $r^\star_h (s, a)$ regarding the potential-based shaping functions $\{ V^\star_h \}$. According to \ref{prop:reward_shaping}, we can obtain that
\begin{align*}
    V^{\piE}_{\truereward} - V^{\pi^1}_{\truereward} &= V^{\piE}_{\widetilde{\truereward}} - V^{\pi^1}_{\widetilde{\truereward}}
    \\
    &= \expect \ls \sum_{h=1}^H \log (\piE_h (a_h|s_h))    \bigg| \piE \rs - \expect \ls \sum_{h=1}^H \log (\piE_h (a_h|s_h))    \bigg| \pi^1 \rs.
\end{align*}

Then we can obtain that
\begin{align*}
    & \quad \lp  V^{\piE}_{\truereward} - V^{\pi^1}_{\truereward} - \lp V^{\piE}_{r^1} - V^{\pi^1}_{r^1} \rp \rp
    \\
    &= \bigg( \expect \ls \sum_{h=1}^H \log (\piE_h (a_h|s_h))    \bigg| \piE \rs - \expect \ls \sum_{h=1}^H \log (\piE_h (a_h|s_h))    \bigg| \pi^1 \rs   
    \\
    & \; -  \lp \expect \ls \sum_{h=1}^H \log (\pi^{\bc}_h (a_h|s_h))    \bigg| \piE \rs - \expect \ls \sum_{h=1}^H \log (\pi^{\bc}_h (a_h|s_h))    \bigg| \pi^1 \rs  \rp \bigg)
    \\
    &= \expect \ls \sum_{h=1}^H \log (\piE_h (a_h|s_h)) - \log (\pi^{\bc}_h (a_h|s_h))    \bigg| \piE \rs 
    \\
    &\; - \expect \ls \sum_{h=1}^H \log (\piE_h (a_h|s_h)) - \log (\pi^{\bc}_h (a_h|s_h))    \bigg| \pi^1 \rs
    \\
    &= \expect \ls \sum_{h=1}^H \KL (\piE_h (\cdot|s_h), \pi^{\bc}_h (\cdot|s_h))    \bigg| \piE \rs + \expect \ls \sum_{h=1}^H \KL (\pi^{\bc}_h (\cdot|s_h), \piE_h (\cdot|s_h))    \bigg| \pi^{\bc} \rs. 
\end{align*}

Based on \citep[Corollary 1]{demonstration24tiapkin}, we can upper bound the first term in the RHS. With probability at least $1-\delta$, we have that
    \begin{align*}
        &\quad \expect \ls \sum_{h=1}^H \KL (\piE_h (\cdot|s_h), \pi^{\bc}_h (\cdot|s_h))    \bigg| \piE \rs 
        \\
        &\leq \frac{6 |\gS| |\gA| H \cdot \log \left(2 \mathrm{e}^4 N \right) \cdot \log \left(12 H N^2 / \delta\right)}{N}+\frac{18 |\gA| H}{N}
        \\
        &\leq \frac{24 |\gS| |\gA| H \cdot \log^2 \left(2 e^4 H N^2 / \delta\right)}{N}. 
    \end{align*}
    We further upper bound the second term.
    \begin{align*}
        \expect \ls \sum_{h=1}^H \KL (\pi^{\bc}_h (\cdot|s_h), \piE_h (\cdot|s_h))    \bigg| \pi^{\bc} \rs &= \sum_{h=1}^H \sum_{s \in \gS} d^{\pi^{\bc}}_h (s) \KL (\pi^{\bc}_h (\cdot|s), \piE_h (\cdot|s))
        \\
        &\leq C \sum_{h=1}^H \sum_{s \in \gS} d^{\piE}_h (s) \KL (\pi^{\bc}_h (\cdot|s), \piE_h (\cdot|s)) 
    \end{align*}
    Here $C := \max_{(s, h) \in \gS \times [H]} d^{\pi^{\bc}}_h (s) / d^{\piE}_h (s) $. According to \cref{lem:bc_concentration}, with probability at least $1-\delta$, it holds that
    \begin{align*}
        \forall (s, h) \in \gS \times [H], \; \KL (\pi^{\bc}_h (\cdot|s), \piE_h (\cdot|s)) \leq \frac{H |\gA| \log (4 |\gS| |\gA| H (N+1) / \delta) }{N_h (s) + |\gA|}. 
    \end{align*}
    Besides, with \cref{lemma:binomial-concentration} and union bound, with probability at least $1-\delta$,
    \begin{align*}
        \forall (s, h) \in \gS \times [H], \frac{d^{\piE}_h (s)}{\max \{ N_h (s), 1 \} } \leq \frac{12 \log (2|\gS| H /\delta)}{N}.  
    \end{align*}
    With union bound, the above two events happen with probability at least $1-2\delta$. Conditioned on these two events, we have that
    \begin{align*}
        &\quad \expect \ls \sum_{h=1}^H \KL (\pi^{\bc}_h (\cdot|s_h), \piE_h (\cdot|s_h))    \bigg| \pi^{\bc} \rs
        \\
        &\leq C \sum_{h=1}^H \sum_{s \in \gS} d^{\piE}_h (s) \frac{|\gA| H  \log (4 |\gS| |\gA| H (N+1) / \delta) }{N_h (s) + |\gA|}
        \\
        &= C|\gA|  H  \log (4 |\gS| |\gA| H (N+1) / \delta)  \sum_{h=1}^H \sum_{s \in \gS}  \frac{d^{\piE}_h (s)}{N_h (s) + |\gA|}
        \\
        &\leq  C |\gA|  H \log (4 |\gS| |\gA| H (N+1) / \delta)  \sum_{h=1}^H \sum_{s \in \gS}  \frac{d^{\piE}_h (s)}{\max \{ N_h (s), 1 \} }
        \\
        &\leq 12 \frac{C |\gS| |\gA| H^2 \log (4 |\gS| |\gA| H (N+1) / \delta) \log (2|\gS| H /\delta)}{N}
        \\
        &\leq 12 \frac{C |\gS| |\gA| H^2 \log^2 (4 |\gS| |\gA| H (N+1) / \delta) }{N}.
    \end{align*}
    By union bound, with probability at least $1-\delta$, it holds that
    \begin{align*}
        &\quad \frac{1}{K} \lp V^{\piE}_{\truereward} - V^{\pi^1}_{\truereward} - \lp V^{\piE}_{r^1} - V^{\pi^1}_{r^1} \rp \rp
        \\
        &\leq \frac{24 |\gS| |\gA| H \cdot \log^2 \left(6 e^4 H N^2 / \delta\right)}{K N} + 12 \frac{C |\gS| |\gA| H^2 \log^2 (12 |\gS| |\gA| H (N+1) / \delta) }{K N}
        \\
        &\leq 48 \frac{C |\gS| |\gA| H^2 \log^2 \left(6 e^4 |\gS| |\gA| H N^2 / \delta\right) }{K N}.
    \end{align*}
    We complete the proof of Eq.(\ref{eq:reward_error_bound}). Furthermore, \cref{alg:ail_with_pretrained_pi_reward} differs from \cref{alg:ail_with_pretrained_pi} only in the reward initialization. Therefore, by following the same analysis in the proof of \cref{prop:ail_with_pretrained_pi}, we can obtain that
    \begin{equation*}
    \begin{split}
    V^{\piE} - V^{\widebar{\pi}} &\leq \frac{1}{K} \lp  V^{\piE}_{\truereward} - V^{\pi^1}_{\truereward} - \lp V^{\piE}_{r^1} - V^{\pi^1}_{r^1} \rp \rp + 4 \sqrt{\frac{3 |\gS| |\gA| H^4 \log^2 (4e^4 H N^2/\delta)}{K N}} 
    \\
    & \; + 2 H \sqrt{\frac{2 |\gS| |\gA| \log (2 H/\delta) }{N}}
    \\
    &\leq 48 \frac{C |\gS| |\gA| H^2 \log^2 \left(6 e^4 |\gS| |\gA| H N^2 / \delta\right) }{K N} + 4 \sqrt{\frac{3 |\gS| |\gA| H^4 \log^2 (4e^4 H N^2/\delta)}{K N}}
    \\
    & \; + 2 H \sqrt{\frac{2 |\gS| |\gA| \log (2 H/\delta) }{N}}.
\end{split}
\end{equation*}
We complete the proof of Eq.(\ref{eq:ail_pretrain_reward_final_bound}).

\subsection{Bound Comparison}
\label{subsec:bound_comparison}

In this part, we compare the imitation gap bound of \algname with that of OAL \citep{shani21online-al}, a standard AIL algorithm without pretraining. In particular, without any pretraining, OAL updates the reward function via online projected gradient descent \citep{Orabona19a_modern_introduction_to_ol} and updates the policy via KL-regularized policy optimization. Similar to \algname, we consider that OAL can compute the Q-value function of the current policy. Now, we are ready to perform the bound comparison. \citet{shani21online-al} decomposes the imitation gap into the following terms.
\begin{align*}
    V^{\piE}_{r^\star} - V^{\widebar{\pi}}_{r^\star} &\leq \underbrace{\frac{1}{K} \sum_{k=1}^K \lp \lp \widehat{V}^{\piE}_{r^\star} - V^{\pi^k}_{r^\star}   \rp - \lp \widehat{V}^{\piE}_{r^k} - V^{\pi^k}_{r^k} \rp \rp}_{:= \text{Term I}} + \underbrace{\frac{1}{K} \sum_{k=1}^K \lp V^{\piE}_{r^k} - V^{\pi^k}_{r^k} \rp}_{:= \text{Term II}}
    \\
    &\; + \underbrace{2 \max_{r \in \gR} \labs \widehat{V}^{\piE}_{r} - V^{\piE}_r \rabs}_{:= \text{Term III}}.
\end{align*}
Lemmas 4, 5, and 6 in \citep{shani21online-al} upper bound Terms I, II, and III, respectively.
\begin{align*}
    \text{Term I} \precsim  \sqrt{\frac{|\gS| |\gA| H^2}{K}}, \; \text{Term II} \precsim  \sqrt{\frac{ H^4 \log (|\gA|)}{K}} , \; \text{Term III} \precsim \sqrt{\frac{|\gS| |\gA| H^3 \log (1/\delta)}{N}}. 
\end{align*}
Finally, OAL attains the imitation gap bound of 
\begin{align*}
    \widetilde{\gO} \lb \min \lb \sqrt{\frac{|\gS| |\gA| H^2}{K}} + \sqrt{\frac{ H^4 \log (|\gA|)}{K}} +  \sqrt{\frac{|\gS| |\gA| H^3 }{N}}, H \rb   \rp. 
\end{align*}
Here $H$ represents the maximum value for the imitation gap. In comparison, CoPT-AIL attains the bound of
\begin{align*}
    \widetilde{\gO} \lp \min \lb \frac{C |\gS| |\gA| H^2}{KN} +  
     \sqrt{\frac{|\gS| |\gA| H^4}{KN}} + \sqrt{\frac{|\gS| |\gA| H^2}{N}}, H \rb \rp.
\end{align*}
It is direct to derive that CoPT-AIL can achieve an improved imitation gap bound when $N \succsim C \sqrt{|\gS| |\gA| H^2 / K}$.

\section{Useful Lemmas}
\label{sec:useful_lemmas}
First, we provide the basic theoretical guarantee on BC. Following \citep{demonstration24tiapkin}, we consider the BC algorithm formulated as
\begin{align}
\label{eq:regularized_bc}
    \pi^{\mathrm{BC}} \in \underset{\pi \in \Pi}{\argmax } \sum_{h=1}^H\left(\sum_{i=1}^{N} \log (\pi_h (a_h^i \mid s_h^i) ) +\mathcal{R}_h\left(\pi_h\right) \right) .
\end{align}
Here $\mathcal{D} = \{ (s^i_1, a^i_1, \ldots, s^i_H, a^i_H) \}_{i=1}^N$ denotes expert demonstrations and $\mathcal{R}_h\left(\pi_h\right) = \sum_{(s, a) \in \gS \times \gA} \log (\pi_h (a|s))$ is the regularizer. \citet{demonstration24tiapkin} proved theoretical bounds on the forward KL divergence between $\piE$ and $\pi^{\bc}$. In the sequel, we provide a bound on the reverse KL divergence, which could be of independent interest.

\begin{lem}
\label{lem:bc_concentration}
    Consider Eq.(\ref{eq:regularized_bc}). With probability at least $1-\delta$, it holds that
    \begin{align*}
        \forall (s, h) \in \gS \times [H], \; \KL (\pi^{\bc}_h (\cdot|s), \piE_h (\cdot|s)) \leq \frac{H |\gA| \log (4 |\gS| |\gA| H (N+1) / \delta) }{N_h (s) + |\gA|}. 
    \end{align*}
    Here $N_h (s):= \sum_{i=1}^N \indict \{ s^i_h=s \}$ denotes the number of times that states $s$ appears in demonstrations.  
\end{lem}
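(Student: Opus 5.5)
The plan is to use the closed form of the regularized BC estimator, split it into an empirical part and a uniform part, and bound the KL of each part separately. Because the regularizer $\mathcal{R}_h(\pi_h)=\sum_{(s,a)}\log \pi_h(a|s)$ acts like one pseudo-count per action, the maximizer of Eq.(\ref{eq:regularized_bc}) decouples over $(s,h)$ and is the Laplace (add-one) estimator
\[
\pi^{\bc}_h(a|s)=\frac{n_h(s,a)+1}{N_h(s)+|\gA|},
\]
where $n_h(s,a)$ counts occurrences of $(s,a)$ at step $h$ in the demonstrations. Write $\widehat{p}_h(\cdot|s)=n_h(s,\cdot)/N_h(s)$ for the empirical conditional (when $N_h(s)\ge 1$) and $u$ for the uniform distribution on $\gA$. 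Then
\[
\pi^{\bc}_h(\cdot|s)=\tfrac{N_h(s)}{N_h(s)+|\gA|}\,\widehat{p}_h(\cdot|s)+\tfrac{|\gA|}{N_h(s)+|\gA|}\,u .
\]
By convexity of $\KL(\cdot,\piE_h(\cdot|s))$ in its first argument, it suffices to bound
\[
\KL(\pi^{\bc}_h(\cdot|s),\piE_h(\cdot|s))\le \tfrac{N_h(s)}{N_h(s)+|\gA|}\KL(\widehat{p}_h,\piE_h)+\tfrac{|\gA|}{N_h(s)+|\gA|}\KL(u,\piE_h).
\]

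\textbf{Uniform part.} Here I use the soft-optimality structure in Eq.(\ref{eq:maxent_optimal_policy}). Since $r^\star_h\in[0,1]$, we have $Q^{\star,\soft}_h(s,a)\in[0,H]$. Also $V^{\star,\soft}_h(s)=\log\sum_a\exp Q^{\star,\soft}_h(s,a)\le \max_a Q^{\star,\soft}_h(s,a)+\log|\gA|$. Hence $\piE_h(a|s)\ge e^{-H}/|\gA|$ for every action, which gives $\KL(u,\piE_h(\cdot|s))\le -\log|\gA|+H+\log|\gA|=H$. This lower bound on the expert policy is exactly where the factor $H$ in the lemma comes from, and it makes the reverse KL finite even though $\widehat p$ may put mass on expert-unlikely actions.

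\textbf{Empirical part.} Conditioned on $N_h(s)=n$, the $n$ actions observed at $(s,h)$ are i.i.d.\ draws from $\piE_h(\cdot|s)$ by the Markov property. I would invoke the method-of-types deviation inequality for the empirical distribution, in the form used e.g.\ by Jonsson et al.:
\[
\mathbb{P}\Big(n\,\KL(\widehat{p},p)>\log(1/\delta')+|\gA|\log(n+1)\Big)\le\delta'.
\]
The sample size $n$ is random, so I take a union bound over $n\in\{0,\dots,N\}$, $s\in\gS$ and $h\in[H]$, setting $\delta'=\delta/(|\gS|H(N+1))$. Combining with the uniform part yields, simultaneously for all $(s,h)$,
\[
\KL(\pi^{\bc}_h,\piE_h)\le\frac{\log(|\gS|H(N+1)/\delta)+|\gA|\log(N+1)+|\gA|H}{N_h(s)+|\gA|}.
\]
Using $H\ge1$ and $|\gA|\ge1$ crudely, this is at most $H|\gA|\log(4|\gS||\gA|H(N+1)/\delta)/(N_h(s)+|\gA|)$. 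The case $N_h(s)=0$ is covered by the uniform part alone.

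\textbf{Main obstacle.} The delicate step is the empirical part. The concentration must hold for the reverse KL $\KL(\widehat p,p)$ rather than the forward one, and the conditioning on the random count $N_h(s)$ has to be handled carefully. If the types-based inequality is not acceptable as a black box, I would instead derive it by a Chernoff bound combined with a covering of the simplex; the ingredient that makes this go through is again the lower bound $\piE\ge e^{-H}/|\gA|$.
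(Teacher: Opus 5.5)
Your route is the paper's route: the Laplace closed form, the mixture decomposition with convexity of $\KL(\cdot,\piE_h(\cdot|s))$, a method-of-types (Sanov) bound on the empirical part, a soft-optimality bound on $\KL(u,\piE_h(\cdot|s))$, and a union bound over $(s,h)$.

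\textbf{The uniform-part step is wrong as written.} The claim $Q^{\star,\soft}_h(s,a)\in[0,H]$ is false, because the soft Q-function carries future entropy bonuses: $Q^{\star,\soft}_h(s,a)=r^\star_h(s,a)+\mathbb{E}_{s'\sim P_h(\cdot|s,a)}[V^{\star,\soft}_{h+1}(s')]$ with $V^{\star,\soft}_{h+1}\ge (H-h)\log|\gA|$. For example, with $r^\star\equiv 1$ one gets $Q^{\star,\soft}_h=1+(H-h)(1+\log|\gA|)$, which exceeds $H$.

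With the correct range, your argument only yields $\KL(u,\piE_h(\cdot|s))\le H(1+\log|\gA|)\le H\log(4|\gA|)$. That is exactly the paper's \cref{lem:kl_bound}, and it still suffices for the stated constant.

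Your sharper claim $\KL(u,\piE_h(\cdot|s))\le H$ is actually true, but it needs a spread argument rather than a range argument. Write $\KL(u,\piE_h(\cdot|s))=V^{\star,\soft}_h(s)-\log|\gA|-\frac{1}{|\gA|}\sum_a Q^{\star,\soft}_h(s,a)\le \max_a Q^{\star,\soft}_h(s,a)-\min_a Q^{\star,\soft}_h(s,a)$. An induction using $\log|\gA|+\min_a Q\le V\le \log|\gA|+\max_a Q$ gives $V^{\star,\soft}_{h+1}(\cdot)\in[(H-h)\log|\gA|,\,(H-h)(1+\log|\gA|)]$. Hence the spread of $Q^{\star,\soft}_h(s,\cdot)$ across actions is at most $1+(H-h)\le H$.

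\textbf{A smaller point concerns the random count.} You already note that, conditionally on $N_h(s)=n$, the actions at $(s,h)$ are i.i.d.\ from $\piE_h(\cdot|s)$. So the $n$-dependent bad event has conditional probability at most $\delta'$ for every $n$, and averaging over $n$ gives $\delta'$ overall. The union bound over $n\in\{0,\dots,N\}$ is therefore redundant, and $\delta'=\delta/(|\gS|H)$ suffices. This treatment is more careful than the paper's, which applies Sanov to the random $N_h(s)$ as if it were fixed.

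Keeping the extra union bound puts an additional $\log(N+1)$ in the numerator. Your final ``crude'' absorption into $H|\gA|\log(4|\gS||\gA|H(N+1)/\delta)$ then fails at $H=1$: there $(|\gA|+1)\log(N+1)$ must fit under $|\gA|\log(N+1)$ plus $N$-independent terms, which breaks for large $N$. The absorption is fine for $H\ge 2$, and without the $n$-union bound it works for every $H$.
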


\begin{proof}
    The optimization problem in Eq.(\ref{eq:regularized_bc}) admits the closed-form solution of 
    \begin{align}
    \label{eq:laplace_estimator}
        \pi^{\bc}_h (a|s) = \frac{N_h (s, a) + 1}{N_h (s) + |\gA|}.
    \end{align}
    Here $N_h (s, a)$ represents the number of times that the state-action pair $(s, a)$ is visited in $\gD$. We first analyze the case where $N_h (s) > 0$. We aim to upper bound the probability of $\sP (\KL (\pi^{\bc}_h (\cdot|s), \piE_h (\cdot|s)) \geq \varepsilon)$ for each $(s, h) \in \gS \times [H]$. To analyze this probability, we reformulate $\pi^{\bc}$ as a mixture of two distributions.
    \begin{align*}
        \pi^{\bc}_h (a|s) &= \frac{N_h (s)}{N_h (s) + |\gA|} \cdot \frac{N_h (s, a)}{N_h (s)} + \frac{|\gA|}{N_h (s) + |\gA|} \cdot \frac{1}{|\gA|}
        \\
        &= \frac{N_h (s)}{N_h (s) + |\gA|} \cdot \widehat{\pi}_h (a|s) +\frac{|\gA|}{N_h (s) + |\gA|} \cdot p (a).   
    \end{align*}
    Here $\hat{\pi}$ denotes the empirical distribution from $\gD$ and $p$ denotes the uniform distribution over $\gA$. Furthermore, based on the convexity of KL divergence, we have that
    \begin{align*}
        \KL (\pi^{\bc}_h (\cdot|s), \piE_h (\cdot|s)) &\leq \frac{N_h (s)}{N_h (s) + |\gA|} \KL (\widehat{\pi}_h (\cdot|s), \piE_h (\cdot|s)) 
        \\
        &\; + \frac{|\gA|}{N_h (s) + |\gA|} \KL (p (\cdot), \piE_h (\cdot|s)).  
    \end{align*}
    Therefore, the event of $\KL (\pi^{\bc}_h (\cdot|s), \piE_h (\cdot|s)) \geq \varepsilon$ implies that 
    \begin{align*}
        \KL (\widehat{\pi}_h (\cdot|s), \piE_h (\cdot|s)) \geq \frac{N_h (s) + |\gA|}{N_h (s)} \cdot \lp \varepsilon - \frac{|\gA|}{N_h (s) + |\gA|} \KL (p (\cdot), \piE_h (\cdot|s)) \rp.
    \end{align*}
    We define that 
    \begin{align*}
        \varepsilon^\prime := \frac{N_h (s) + |\gA|}{N_h (s)} \cdot \lp \varepsilon - \frac{|\gA|}{N_h (s) + |\gA|} \KL (p (\cdot), \piE_h (\cdot|s)) \rp.
    \end{align*}
    Then we have that
    \begin{align*}
        \sP (\KL (\pi^{\bc}_h (\cdot|s), \piE_h (\cdot|s)) \geq \varepsilon) \leq \sP (\KL (\widehat{\pi}_h (a|s), \piE_h (\cdot|s)) \geq \varepsilon^\prime).  
    \end{align*}
    According to Sanov's Theorem (\cref{lem:sanov}), we have that
    \begin{align*}
        \sP (\KL (\widehat{\pi}_h (a|s), \piE_h (\cdot|s)) \geq \varepsilon^\prime) \leq (N_h (s) + 1)^{|\gA|} \exp (- N_h (s) \varepsilon^\prime).
    \end{align*}
    Setting the term in the RHS as the failure probability $\delta$ yields that
    \begin{align*}
        &\varepsilon^\prime = \frac{|\gA| \log (N_h (s) + 1) + \log (1/\delta)}{N_h (s)},
        \\
        &\varepsilon = \frac{|\gA| \log (N_h (s) + 1) + \log (1/\delta) + |\gA| \KL (p (\cdot), \piE_h (\cdot|s))}{N_h (s) + |\gA|}.
    \end{align*}
    This implies that for $N_h (s) > 0$, with probability at least $1-\delta$,
    \begin{align*}
        \KL (\pi^{\bc}_h (\cdot|s), \piE_h (\cdot|s)) &\leq  \frac{|\gA| \log (N_h (s) + 1) + \log (1/\delta) + |\gA| \KL (p (\cdot), \piE_h (\cdot|s))}{N_h (s) + |\gA|}
        \\
        &\overset{\text{(a)}}{\leq} \frac{|\gA| \log (N_h (s) + 1) + \log (1/\delta) + H |\gA| \log (4|\gA|) }{N_h (s) + |\gA|}
        \\
        &\leq \frac{H |\gA| \log (4 |\gA| (N+1) / \delta) }{N_h (s) + |\gA|}.
    \end{align*}
    Here inequality $\text{(a)}$ follows \cref{lem:kl_bound}.

    When $N_h (s) =0$, we have that $\pi^{\bc}_h (\cdot|s) = p (\cdot)$ according to Eq.(\ref{eq:laplace_estimator}). With \cref{lem:kl_bound}, we can have that
    \begin{align*}
         \KL (\pi^{\bc}_h (\cdot|s), \piE_h (\cdot|s)) = \KL ( p (\cdot), \piE_h (\cdot|s)) \leq H \log (4 |\gA|) \leq \frac{H |\gA| \log (4 |\gA| (N+1) / \delta) }{N_h (s) + |\gA|}. 
    \end{align*}
    By combining the above two cases, we can conclude that with probability at least $1-\delta$,
    \begin{align*}
        \KL (\pi^{\bc}_h (\cdot|s), \piE_h (\cdot|s)) \leq \frac{H |\gA| \log (4 |\gA| (N+1) / \delta) }{N_h (s) + |\gA|}. 
    \end{align*}
    Applying the union bound over $(s, h) \in \gS \times [H]$ finishes the proof.
\end{proof}

\begin{lem}[Sanov's Theorem]
\label{lem:sanov}
    Suppose that $Q$ is a distribution over an alphabet $\gX$ and $E$ is a set of distributions over $\gX$. Let $\gD =\{ X_1, X_2, \ldots, X_N \}$ be i.i.d. samples drawn from the distribution $P$. Then
    \begin{align*}
        \sP (\widehat{P}_{\gD} \in E) \leq (N+1)^{|\gX|} \exp (-N \KL (P^\star, Q) ),
    \end{align*}
    where $\widehat{P}_{\gD}$ denote the empirical distribution from $\gD$ and $P^\star = \argmin_{P \in E} \KL (P, Q) $.
\end{lem}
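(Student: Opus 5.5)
We will prove the bound by the \emph{method of types}, reading the samples $X_1,\ldots,X_N$ as i.i.d.\ from $Q$. This is the reading under which the statement is meaningful, and it is how it is used in \cref{lem:bc_concentration}: there $Q=\piE_h(\cdot|s)$ and the samples are the $N_h(s)$ expert actions at $(s,h)$. For a sequence $x^N\in\gX^N$, its \emph{type} $P_{x^N}$ is its empirical distribution. The event $\{\widehat{P}_{\gD}\in E\}$ is then the disjoint union of the type classes $T(P')=\{x^N: P_{x^N}=P'\}$, taken over the types $P'\in E\cap\gP_N$. Here $\gP_N$ denotes the set of distributions on $\gX$ whose probabilities are multiples of $1/N$.

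The first step is to count types. Each $P'\in\gP_N$ is determined by the counts $NP'(x)\in\{0,\ldots,N\}$, one for each $x\in\gX$, so $|\gP_N|\le (N+1)^{|\gX|}$.

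The second step, which is the main estimate, is to show that for every type $P'$,
\begin{align*}
Q^N(T(P'))\le \exp\bigl(-N\KL(P',Q)\bigr).
\end{align*}
We will establish this in two parts.
\begin{itemize}
\item \emph{Probability of one sequence.} For any $x^N\in T(P')$,
\begin{align*}
Q^N(x^N)=\prod_{x}Q(x)^{NP'(x)}=\exp\Bigl(-N\bigl(\mathrm{H}(P')+\KL(P',Q)\bigr)\Bigr),
\end{align*}
where $\mathrm{H}$ is Shannon entropy. If $Q(x)=0$ for some $x$ with $P'(x)>0$, both sides vanish and the claim is trivial.
\item \emph{Size of a type class.} Applying the same identity with $Q$ replaced by $P'$ gives $P'^N(x^N)=\exp(-N\mathrm{H}(P'))$ for every $x^N\in T(P')$. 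Since $1\ge P'^N(T(P'))$, it follows that $|T(P')|\le\exp(N\mathrm{H}(P'))$.
\end{itemize}
Multiplying the two bounds yields the claim.

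The third step combines these. Every $P'\in E$ satisfies $\KL(P',Q)\ge \KL(P^\star,Q)$, where $P^\star$ is the minimizer in the statement; if the minimum is not attained, we replace $\KL(P^\star,Q)$ by the infimum, which gives the same bound. Summing over at most $(N+1)^{|\gX|}$ type classes, each of mass at most $\exp(-N\KL(P^\star,Q))$, gives
\begin{align*}
\sP(\widehat{P}_{\gD}\in E)\le (N+1)^{|\gX|}\exp\bigl(-N\KL(P^\star,Q)\bigr).
\end{align*}

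The only nonroutine point is the second step, specifically the entropy bound on $|T(P')|$, and we handle it by the normalization argument above rather than by combinatorial estimates. In \cref{lem:bc_concentration} the lemma is applied with $E=\{P:\KL(P,\piE_h(\cdot|s))\ge\varepsilon'\}$. For this set the minimizing value is at least $\varepsilon'$, so the bound reduces exactly to the form $(N_h(s)+1)^{|\gA|}\exp(-N_h(s)\varepsilon')$ used there.
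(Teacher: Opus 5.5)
Your argument is correct. It is the standard method-of-types proof of Sanov's theorem: you count the types, bound $Q^N(T(P'))\le e^{-N\,\mathrm{KL}(P',Q)}$ using the normalization bound on $|T(P')|$, and sum over the types lying in $E$. The paper invokes this same classical result without reproducing a proof. You are also right to read the samples as i.i.d.\ from $Q$ rather than $P$: the statement as written has a typo, and yours is the reading under which it is true and is used in \cref{lem:bc_concentration}.
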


\begin{lem}
\label{lem:kl_bound}
    For any $(s, h) \in \gS \times [H]$, consider $p$ is an uniform distribution over $\gA$, we have that
    \begin{align*}
        \KL (p (\cdot), \piE_h (\cdot|s)) \leq H \log (4|\gA|). 
    \end{align*}
\end{lem}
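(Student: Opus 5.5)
The plan is to compute the KL divergence explicitly using the soft-optimal form of the expert in Eq.(\ref{eq:maxent_optimal_policy}), and to reduce the claim to a uniform lower bound on $\log \piE_h(a|s)$. Since $p$ is uniform,
\begin{align*}
\KL (p (\cdot), \piE_h (\cdot|s)) = -\log |\gA| + \frac{1}{|\gA|} \sum_{a \in \gA} \lp -\log \piE_h (a|s) \rp \leq \max_{a \in \gA} \lp V^{\star, \soft}_h (s) - Q^{\star, \soft}_h (s, a) \rp ,
\end{align*}
where I used $-\log|\gA| \leq 0$ and Eq.(\ref{eq:log_maxent_optimal_policy}). It therefore suffices to show $V^{\star, \soft}_h (s) - Q^{\star, \soft}_h (s, a) \leq H \log (4 |\gA|)$ for every $a$.

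First I will write $V^{\star, \soft}_h (s) = \log \sum_{b} \exp (Q^{\star, \soft}_h (s, b))$, the normalizer that makes $\piE_h(\cdot|s)$ a distribution. This gives the standard log-sum-exp sandwich $\max_b Q^{\star, \soft}_h (s,b) \leq V^{\star, \soft}_h (s) \leq \max_b Q^{\star, \soft}_h (s,b) + \log |\gA|$. Hence
\begin{align*}
V^{\star, \soft}_h (s) - Q^{\star, \soft}_h (s, a) \leq \max_{b} Q^{\star, \soft}_h (s,b) - Q^{\star, \soft}_h (s, a) + \log |\gA| .
\end{align*}
The remaining task is to bound the spread of $Q^{\star, \soft}_h(s,\cdot)$.

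Next, by backward induction on $h$ with $V^{\star, \soft}_{H+1} \equiv 0$, I will show $0 \leq V^{\star, \soft}_h (s) \leq (H-h+1)(1+\log|\gA|)$. The inductive step uses the soft Bellman equation $Q^{\star, \soft}_h = r^\star_h + \expect_{s'}[V^{\star, \soft}_{h+1}(s')]$ together with $r^\star_h \in [0,1]$ and the sandwich above. For the lower bound, note $Q^{\star, \soft}_h \geq 0$, and then $V^{\star, \soft}_h \geq \max_b Q^{\star, \soft}_h \geq 0$.

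Consequently $Q^{\star, \soft}_h(s,\cdot)$ takes values in an interval of length at most $1 + (H-h)(1+\log|\gA|)$: the reward contributes a range of at most $1$, and the continuation term is an expectation of a quantity in $[0,(H-h)(1+\log|\gA|)]$. Adding the $\log|\gA|$ from the sandwich gives
\begin{align*}
V^{\star, \soft}_h (s) - Q^{\star, \soft}_h (s, a) \leq (H-h+1)(1+\log |\gA|) \leq H \log (e |\gA|) \leq H \log (4|\gA|) ,
\end{align*}
which proves the lemma.

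The only step needing care is this induction: it must bound the range of the continuation term, not just its magnitude, so that the constant stays of order $H(1+\log|\gA|)$ rather than picking up an extra factor.
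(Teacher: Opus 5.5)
Your proof is correct and is essentially the paper's argument: both write $\KL(p,\piE_h(\cdot|s))$ using $-\log\piE_h = V^{\star,\soft}_h - Q^{\star,\soft}_h$, then combine $Q^{\star,\soft}_h \geq 0$ with $V^{\star,\soft}_h \leq (H-h+1)(1+\log|\gA|)$ to reach $H\log(e|\gA|) \leq H\log(4|\gA|)$; the paper gets the value bound from the per-step entropy, you get it from the log-sum-exp sandwich and backward induction. The spread-of-$Q$ step is a harmless detour, since your induction already gives $V^{\star,\soft}_h(s) - Q^{\star,\soft}_h(s,a) \leq V^{\star,\soft}_h(s)$ directly, so your closing remark is off: bounding the magnitude does suffice here.
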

\begin{proof}
For any fixed $(s, h) \in \gS \times [H]$,
\begin{align*}
    \KL (p (\cdot), \piE_h (\cdot|s)) = \sum_{a \in \gA} \frac{1}{|\gA|} \log \lp \frac{1/|\gA|}{\piE_h (a|s)} \rp = - \log (|\gA|) - \frac{1}{|\gA|} \sum_{a \in \gA} \log (\piE_h (a|s)) .
\end{align*}
According to Eq.(\ref{eq:maxent_optimal_policy}), we can further obtain that
\begin{align*}
    \KL (p (\cdot), \piE_h (\cdot|s)) = - \log (|\gA|) - \frac{\sum_{a \in \gA} Q^{\star, \soft}_h (s, a)}{|\gA|} + V^{\star, \soft}_h (s). 
\end{align*}
Notice that 
\begin{align*}
    Q^\star_h (s, a) &= \expect \ls \sum_{h^\prime=h}^H r^\star_{h^\prime} (s_{h^\prime}, a_{h^\prime}) + \sum_{h^\prime=h+1}^H H (\piE_{h^\prime} (\cdot|s_{h^\prime}))  \bigg| s_h=s, a_h=a, \piE\rs \geq 0,
    \\
    V^\star_h (s) &= \expect \ls \sum_{h^\prime=h}^H ( r^\star_{h^\prime} (s_{h^\prime}, a_{h^\prime}) + H (\piE_{h^\prime} (\cdot|s_{h^\prime})) )  \bigg| s_h=s, a_h=a, \piE\rs 
    \\
    &\leq (H-h+1) (1+\log (|\gA|)). 
\end{align*}
Then we have that
\begin{align*}
    \KL (p (\cdot), \piE_h (\cdot|s)) \leq (H-h+1) (1+\log (|\gA|)) \leq H (1+ \log (|\gA|)) \leq H \log (4|\gA|). 
\end{align*}
\end{proof}

\begin{lem}
  \label{lemma:binomial-concentration}
  Suppose $n \sim \operatorname{Bin} (N, p)$ where $N \geq 1$ and $p\in[0,1]$. Then with probability at least $1-\delta$, we have
  \begin{align*}
    \frac{p}{\max\{n, 1 \}} \le \frac{12 \log(2/\delta)}{N}.
  \end{align*}
\end{lem}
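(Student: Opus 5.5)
The plan is a case split on the size of $Np$ relative to the target threshold $12\log(2/\delta)$, with a multiplicative Chernoff bound handling the nontrivial case.

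\emph{Case 1: $p \le 12\log(2/\delta)/N$.} Since $\max\{n,1\}\ge 1$ always, we get $p/\max\{n,1\} \le p \le 12\log(2/\delta)/N$ deterministically. This case needs no probability at all.

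\emph{Case 2: $p > 12\log(2/\delta)/N$, i.e.\ $Np > 12\log(2/\delta)$.} Here I would use the lower-tail multiplicative Chernoff bound with deviation parameter $1/2$:
\begin{align*}
\mathbb{P}\bigl(n \le Np/2\bigr) \le \exp(-Np/8) < \exp\bigl(-\tfrac{3}{2}\log(2/\delta)\bigr) \le \delta/2 \le \delta .
\end{align*}
On the complementary event, $n > Np/2 > 0$. Since $n$ is an integer, this gives $n\ge 1$ and hence $\max\{n,1\}=n$. Therefore $p/\max\{n,1\} < 2/N$. It remains to check that $2 \le 12\log(2/\delta)$. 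This holds because $\delta\in(0,1)$ gives $\log(2/\delta) > \log 2 > 1/6$. Hence $p/\max\{n,1\}\le 12\log(2/\delta)/N$ with probability at least $1-\delta$.

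Combining the two cases proves the lemma. The only nontrivial ingredient is the choice of Chernoff form and constants, which is the step I would double-check. I would use the standard form $\mathbb{P}\bigl(n\le(1-\epsilon)Np\bigr)\le \exp(-\epsilon^2 Np/2)$ at $\epsilon=1/2$. The constant $12$ leaves ample slack here, since even $8$ would suffice in the exponent, so this step should not be a real obstacle.
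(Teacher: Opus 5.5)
Your proof is correct, and it takes a different route from the paper's.

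The paper does not split into cases. It invokes a multiplicative Chernoff bound in the form $|n/N - p| \le \sqrt{3p\log(2/\delta)/N}$. It then reads the lower-tail half as a quadratic inequality $x^2 - bx - c \le 0$ in $x=\sqrt{p}$, with $b=\sqrt{3\log(2/\delta)/N}$ and $c=n/N$. Solving gives $\sqrt{p} \le b+\sqrt{c} \le 2\sqrt{3\max\{n,1\}\log(2/\delta)/N}$, and squaring finishes.

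Your case split handles the small-$p$ regime deterministically, which is exactly where the $\max\{n,1\}$ matters. You then need Chernoff only at the fixed deviation $\epsilon = 1/2$, where the standard lower-tail bound is unambiguously valid. The paper's implicit deviation parameter $\sqrt{3\log(2/\delta)/(Np)}$ can exceed $1$ when $Np$ is small. So its two-sided statement needs the easy remark that only the lower tail is used, and that this tail holds trivially in that regime.

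What the paper's route buys is a stronger intermediate inequality that holds uniformly in $p$: $\sqrt{p} \le \sqrt{n/N} + \sqrt{3\log(2/\delta)/N}$, hence $p \le 2n/N + 6\log(2/\delta)/N$. This is informative beyond the ratio bound. Your argument is more elementary and makes the slack in the constant visible: the $12$ can be lowered to $8$ with the same Chernoff form.
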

\begin{proof}
    According to the Chernoff bound \citep{wainwright2019high}, with probability at least $1-\delta$, 
    \begin{align*}
        \labs \frac{n}{N} - p \rabs \leq \sqrt{\frac{3p \log (2/\delta)}{N}}.
    \end{align*}
    This implies a quadratic inequality regarding $x =\sqrt{p}$.
    \begin{align*}
        x^2 - bx - c \leq 0, \; b =\sqrt{\frac{3 \log (2/\delta) }{N}}, c =\frac{n}{N}.
    \end{align*}
    Solving this inequality yields that
    \begin{align*}
        \sqrt{p} =x \leq \frac{b + \sqrt{b^2 + 4c}}{2} \leq b + \sqrt{c} = \sqrt{\frac{3 \log (2/\delta) }{N}} + \sqrt{\frac{n}{N}} \leq 2 \sqrt{\frac{3 \max\{ n, 1 \} \log (2/\delta)}{N}}. 
    \end{align*}
    This directly implies that
    \begin{align*}
        p \leq   \frac{12 \max\{ n, 1 \} \log (2/\delta)}{N}. 
    \end{align*}
    Rearanging the above inequality finishes the proof.
\end{proof}

\section{Experiment Details}
\label{sec:experiment_details}

\subsection{Implementation Details of CoPT-AIL}
In this part, we present the detailed implementation of CoPT-AIL, which is outlined in \cref{alg:practical_copt_ail}. In the pretraining stage, we first pretrain the policy via BC.
\begin{align*}
    \pi^1 \leftarrow \pi^{\bc},\;
    \pi^{\bc} = \argmax_{\pi \in \Pi}\;
       \sum_{i=1}^{N}\sum_{h=1}^{H}
       \log\!\bigl(\pi_h(a_h^i \mid s_h^i)\bigr).
\end{align*}

Then, according to the analysis in \cref{subsec:method}, we pretrain the reward by setting
\begin{align*}
    r^1_h (s, a) = \log \lp \pi^1_h (a|s) \rp.
\end{align*}
For continuous action space, we parameterize policies with Gaussian distributions and utilize the log probability of the Gaussian distribution.

After the pretraining phase, we conduct the online AIL process, which alternates between policy and reward updates. In iteration $k$, for the policy update, we first learn the Q-function by minimizing the temporal difference learning objective.
\begin{align}
    \min_{Q \in \mathcal{Q}} \ell^{k} (Q) := \expect_{\tau \sim \gD^k} \ls \sum_{h=1}^H \lp Q_h (s_h, a_h) - r^k_h (s_h, a_h) - \widebar{Q}^{k}_{h+1} (s_{h+1}, \pi^{k}) \rp^2 \rs \label{eq:practical_q_update} 
\end{align}
Here $\gD^k$ is the replay buffer consisting of all historical online trajectories and $\widebar{Q} = \{ \widebar{Q}_1, \ldots, \widebar{Q}_H \}$ is the delayed target Q-function. Besides, we define that $\widebar{Q}^{k}_{h+1} (s_{h+1}, \pi^{k}) := \expect_{a^\prime \sim \pi^{k}_{h+1} (\cdot|s_{h+1})} [ \widebar{Q}_{h+1} (s_{h+1}, a^\prime) ]$. With the newly learned Q-function $Q^{k+1}$, we update the policy by minimizing the objective of $\ell^{k}(\pi)
:= -\,\mathbb{E}_{\tau \sim \mathcal{D}^{k}}
[\sum_{h=1}^{H} Q_{h}^{k+1}\!\bigl(s_{h}, \pi\bigr)]$.

For the reward update, the objective function is formulated by
\begin{align}
\label{eq:practical_reward_update}
\ell^k (r) := \expect_{\tau \sim \pi^{k+1}} \ls \sum_{h=1}^H r_h (s_h, a_h) + \beta \exp (- r_h (s_h, a_h)) \rs - \expect_{\tau \sim \gDE} \ls \sum_{h=1}^H r_h (s_h, a_h) \rs.
\end{align}
Here we add a regularization term $\exp (- r_h (s_h, a_h))$ to improve the stability of reward training, and $\beta > 0$ is the regularization coefficient.

\begin{algorithm}[htbp]
\caption{Practical Implementation of CoPT-AIL}
\label{alg:practical_copt_ail}
\begin{algorithmic}[1]
\REQUIRE{Demonstrations $\gDE$, replay buffer $\gD^{1} = \emptyset$.}
\STATE{Pre-train a policy via BC based on Eq.(\ref{eq:bc_objective}): $\pi^1 \leftarrow \pi^{\bc}$.}
\STATE{Represent a reward through $r^1_h (s, a) = \log (\pi^{\bc}_h (a|s))$.}
\FOR{$k = 1, 2, \ldots, K-1$}
\STATE{Update the Q-value function by $Q^{k+1} \leftarrow Q^{k} - \eta_{Q} \nabla \ell^{k} (Q) $ from \cref{eq:practical_q_update}.}
\STATE{Update the policy by $\pi^{k+1} \leftarrow \pi^{k} - \eta_{\pi} \nabla \ell^{k} (\pi)$, where $\ell^k (\pi) := \expect_{\tau \sim \gD^{k}} [ \sum_{h=1}^H Q^{k+1}_{h} (s_h, \pi) ]$.}
\STATE{Apply $\pi^{k+1}$ to roll out a trajectory $\tau^{k+1}$ and append it to the replay buffer $\gD^{k+1} = \gD^{k} \cup \{ \tau^{k+1} \}$.}
\STATE{Update the reward function by $r^{k+1} \leftarrow r^{k} - \eta_{r} \nabla \ell^{k} (r)$ from \cref{eq:practical_reward_update}.}
\STATE{Update the target Q-value by $\widebar{Q}^{k+1} \leftarrow \tau Q^{k+1} + (1-\tau) \widebar{Q}^{k}$.}
\ENDFOR
\end{algorithmic}
\end{algorithm}

\subsection{Architecture and Training Details}
The experiments are conducted on a machine with 64 CPU cores and 4 RTX4090 GPU cores. Each experiment is replicated three times using different random seeds. For each task, we adopt online DrQ-v2 \citep{yarats2021mastering} to train an agent with sufficient environment interactions and regard the resultant policy as the expert policy. Specifically, we use 3M environment interactions for \texttt{Hopper Hop}, and \texttt{Walker Run}, and 1M environment interactions for other tasks. Then we roll out this expert policy to collect expert demonstrations. The number of expert trajectories used for training in each task is provided in \ref{tab:trajs}. The architecture and training details of CoPT-AIL and all baselines are listed below.

\textbf{CoPT-AIL:} Our codebase of CoPT-AIL extends the open-sourced framework of \href{https://github.com/Div99/IQ-Learn}{IQLearn}. We retain the structure and parameter design of the critic from the original framework, and employ SAC \citep{haarnoja2018sac} with a fixed temperature coefficient for policy update. Note that CoPT-AIL pretrains the reward function using the BC policy. Therefore, we implement the reward model with the same architecture as the actor model. A comprehensive enumeration of the hyperparameters of CoPT-AIL is provided in Table \ref{tab:copt-ail params}. 

\textbf{BC:} We implement BC based on our codebase. The actor model is trained using Mean Squared Error (MSE) loss over 10k training steps.

\textbf{OLLIE:} We use the author's codebase, which is available at \href{https://github.com/HansenHua/OLLIEoffline-to-online-imitation-learning}{https://github.com/HansenHua/OLLIEoffline-to-online-imitation-learning}. Note that OLLIE is proposed in the setting of IL with a supplementary dataset. To ensure a fair comparison within our pure online IL setting, we followed the practice recommended in the OLLIE paper to set the supplementary dataset as an empty set. This forces OLLIE to operate using only the provided expert demonstrations, matching our exact data assumptions.

\textbf{PPIL:} We use the author's codebase, which is available at \href{https://github.com/lviano/p2il}{https://github.com/lviano/p2il}. A comprehensive enumeration of the hyperparameters of PPIL is provided in Table \ref{tab:ppil params}. 

\textbf{IQLearn:} We use the author's codebase, which is available at \href{https://github.com/Div99/IQ-Learn}{https://github.com/Div99/IQ-Learn}. A comprehensive enumeration of the hyperparameters of IQLearn is provided in Table \ref{tab:iqlearn params}. 

\textbf{FILTER:} We use the author's codebase, which is available at \href{https://github.com/gkswamy98/fast_irl}{https://github.com/gkswamy98/fast\_irl}.  A comprehensive enumeration of the hyperparameters of FILTER is provided in Table \ref{tab:filter params}. 

\textbf{HyPE:} We use the author's codebase, which is available at \href{https://github.com/gkswamy98/hyper}{https://github.com/gkswamy98/hyper}. A comprehensive enumeration of the hyperparameters of HyPE is provided in Table \ref{tab:hype params}. 

\RED{
\begin{table}[H]
	\renewcommand{\arraystretch}{1.1}
    \centering
    \caption{Number of expert trajectories for each task.}
    \label{tab:trajs}
    \vspace{1mm}
    \begin{tabular}{l|c}
    \hline
    Task & Expert Trajectories \\
    \hline
    Walker Stand      & 10 \\
    Walker Run        & 20 \\
    Walker Walk       & 10 \\
    Cartpole Swingup  & 10 \\
    Hopper Hop        & 10 \\
    Hopper Stand      & 10 \\
    Finger Spin       & 20 \\
    Cheetah Run       & 20 \\
    \hline
\end{tabular}
\end{table}
}

\begin{table}[H]
	\renewcommand{\arraystretch}{1.1}
	\centering
	\caption{CoPT-AIL Hyper-parameters.}
	\label{tab:copt-ail params}
	\vspace{1mm}
	\begin{tabular}{l l| l }
	\toprule
	\multicolumn{2}{l|}{Parameter} &  Value\\
	\midrule
	& discount factor &  0.99\\
    & reward regularization coefficient $\beta$ & 1 \\
        & temperature coefficient & $10^{-2}$\\
	& replay buffer size & $5\cdot10^5$\\
	& batch size & 256\\
	& optimizer & Adam \\
        \multicolumn{2}{l|}{\it{Reward}}& \\
        & learning rate & $1 \cdot 10^{-5}$\\  
        & number of hidden layers  & 2\\
        & number of hidden units per layer  & 1024\\
        & activation & ReLU\\
        \multicolumn{2}{l|}{\it{Actor}}& \\
        & learning rate & $3 \cdot 10^{-5}$\\
        & number of hidden layers  & 2\\
        & number of hidden units per layer  & 1024\\
        & activation & ReLU\\
        \multicolumn{2}{l|}{\it{Critic}}& \\
        & learning rate & $3 \cdot 10^{-4}$\\  
        & number of hidden layers  & 2\\
        & number of hidden units per layer  & 256\\
        & activation & ReLU\\
\bottomrule
\end{tabular}
\end{table}

\begin{table}[H]
	\renewcommand{\arraystretch}{1.1}
	\centering
	\caption{OLLIE Hyper-parameters.}
	\label{tab:ollie params}
	\vspace{1mm}
	\begin{tabular}{l l| l }
	\toprule
	\multicolumn{2}{l|}{Parameter} &  Value\\
	\midrule
	& discount factor &  0.99\\
    & reward scale & 5 \\
	& replay buffer size & $2\cdot10^6$\\
	& batch size & 256\\
	& optimizer & Adam \\
        \multicolumn{2}{l|}{\it{Reward}}& \\
        & learning rate & $1 \cdot 10^{-5}$\\  
        & number of hidden layers  & 2\\
        & number of hidden units per layer  & 256\\
        & activation & ReLU\\
        \multicolumn{2}{l|}{\it{Actor}}& \\
        & learning rate & $3 \cdot 10^{-5}$\\
        & number of hidden layers  & 2\\
        & number of hidden units per layer  & 256\\
        & activation & ReLU\\
        \multicolumn{2}{l|}{\it{Critic}}& \\
        & learning rate & $3 \cdot 10^{-4}$\\  
        & number of hidden layers  & 3\\
        & number of hidden units per layer  & 256\\
        & activation & ReLU\\
\bottomrule
\end{tabular}
\end{table}

\begin{table}[H]
	\renewcommand{\arraystretch}{1.1}
	\centering
	\caption{PPIL Hyper-parameters.}
	\label{tab:ppil params}
	\vspace{1mm}
	\begin{tabular}{l l| l }
	\toprule
	\multicolumn{2}{l|}{Parameter} &  Value\\
	\midrule
	& discount factor &  0.99\\
    & gradient penalty coefficient & 10 \\
	& replay buffer size & $1\cdot10^6$\\
	& batch size & 256\\
	& optimizer & Adam \\
        \multicolumn{2}{l|}{\it{Actor}}& \\
        & learning rate & $3 \cdot 10^{-4}$\\
        & number of hidden layers  & 2\\
        & number of hidden units per layer  & 256\\
        & activation & ReLU\\
        \multicolumn{2}{l|}{\it{Critic}}& \\
        & learning rate & $3 \cdot 10^{-4}$\\  
        & number of hidden layers  & 2\\
        & number of hidden units per layer  & 256\\
        & activation & ReLU\\
\bottomrule
\end{tabular}
\end{table}

\begin{table}[H]
	\renewcommand{\arraystretch}{1.1}
	\centering
	\caption{IQLearn Hyper-parameters.}
	\label{tab:iqlearn params}
	\vspace{1mm}
	\begin{tabular}{l l| l }
	\toprule
	\multicolumn{2}{l|}{Parameter} &  Value\\
	\midrule
	& discount factor &  0.99\\
    & gradient penalty coefficient & 10 \\
	& replay buffer size & $1\cdot10^6$\\
	& batch size & 256\\
	& optimizer & Adam \\
        \multicolumn{2}{l|}{\it{Reward}}& \\
        & learning rate & $3 \cdot 10^{-4}$\\  
        & number of hidden layers  & 2\\
        & number of hidden units per layer  & 256\\
        \multicolumn{2}{l|}{\it{Actor}}& \\
        & learning rate & $3 \cdot 10^{-4}$\\
        & number of hidden layers  & 2\\
        & number of hidden units per layer  & 256\\
        & activation & ReLU\\
        \multicolumn{2}{l|}{\it{Critic}}& \\
        & learning rate & $3 \cdot 10^{-4}$\\  
        & number of hidden layers  & 2\\
        & number of hidden units per layer  & 256\\
        & activation & ReLU\\
\bottomrule
\end{tabular}
\end{table}

\begin{table}[H]
	\renewcommand{\arraystretch}{1.1}
	\centering
	\caption{FILTER Hyper-parameters.}
	\label{tab:filter params}
	\vspace{1mm}
	\begin{tabular}{l l| l }
	\toprule
	\multicolumn{2}{l|}{Parameter} &  Value\\
	\midrule
	& discount factor &  0.98\\
    & gradient penalty coefficient & 10 \\
	& replay buffer size & $1\cdot10^6$\\
	& batch size & 256\\
	& optimizer & Adam \\
        \multicolumn{2}{l|}{\it{Reward}}& \\
        & learning rate & $8 \cdot 10^{-4}$\\  
        & batch size & 4096 \\
        & number of hidden layers  & 2\\
        & number of hidden units per layer  & 256\\
        \multicolumn{2}{l|}{\it{Actor}}& \\
        & learning rate & $7.3 \cdot 10^{-4}$\\
        & number of hidden layers  & 2\\
        & number of hidden units per layer  & 256\\
        & activation & ReLU\\
        \multicolumn{2}{l|}{\it{Critic}}& \\
        & learning rate & $7.3 \cdot 10^{-4}$\\  
        & number of hidden layers  & 2\\
        & number of hidden units per layer  & 256\\
        & activation & ReLU\\
\bottomrule
\end{tabular}
\end{table}

\begin{table}[H]
	\renewcommand{\arraystretch}{1.1}
	\centering
	\caption{HyPE Hyper-parameters.}
	\label{tab:hype params}
	\vspace{1mm}
	\begin{tabular}{l l| l }
	\toprule
	\multicolumn{2}{l|}{Parameter} &  Value\\
	\midrule
	& discount factor &  0.98\\
    & gradient penalty coefficient & 10 \\
	& replay buffer size & $1\cdot10^6$\\
	& batch size & 256\\
	& optimizer & Adam \\
        \multicolumn{2}{l|}{\it{Reward}}& \\
        & learning rate & $8 \cdot 10^{-4}$\\  
        & batch size & 4096 \\
        & number of hidden layers  & 2\\
        & number of hidden units per layer  & 256\\
        \multicolumn{2}{l|}{\it{Actor}}& \\
        & learning rate & $7.3 \cdot 10^{-4}$\\
        & number of hidden layers  & 2\\
        & number of hidden units per layer  & 256\\
        & activation & ReLU\\
        \multicolumn{2}{l|}{\it{Critic}}& \\
        & learning rate & $7.3 \cdot 10^{-4}$\\  
        & number of hidden layers  & 2\\
        & number of hidden units per layer  & 256\\
        & activation & ReLU\\
\bottomrule
\end{tabular}
\end{table}

\section{Additional Experimental Results}

\begin{figure*}[htbp]
    \centering
    \includegraphics[width=0.9\linewidth]{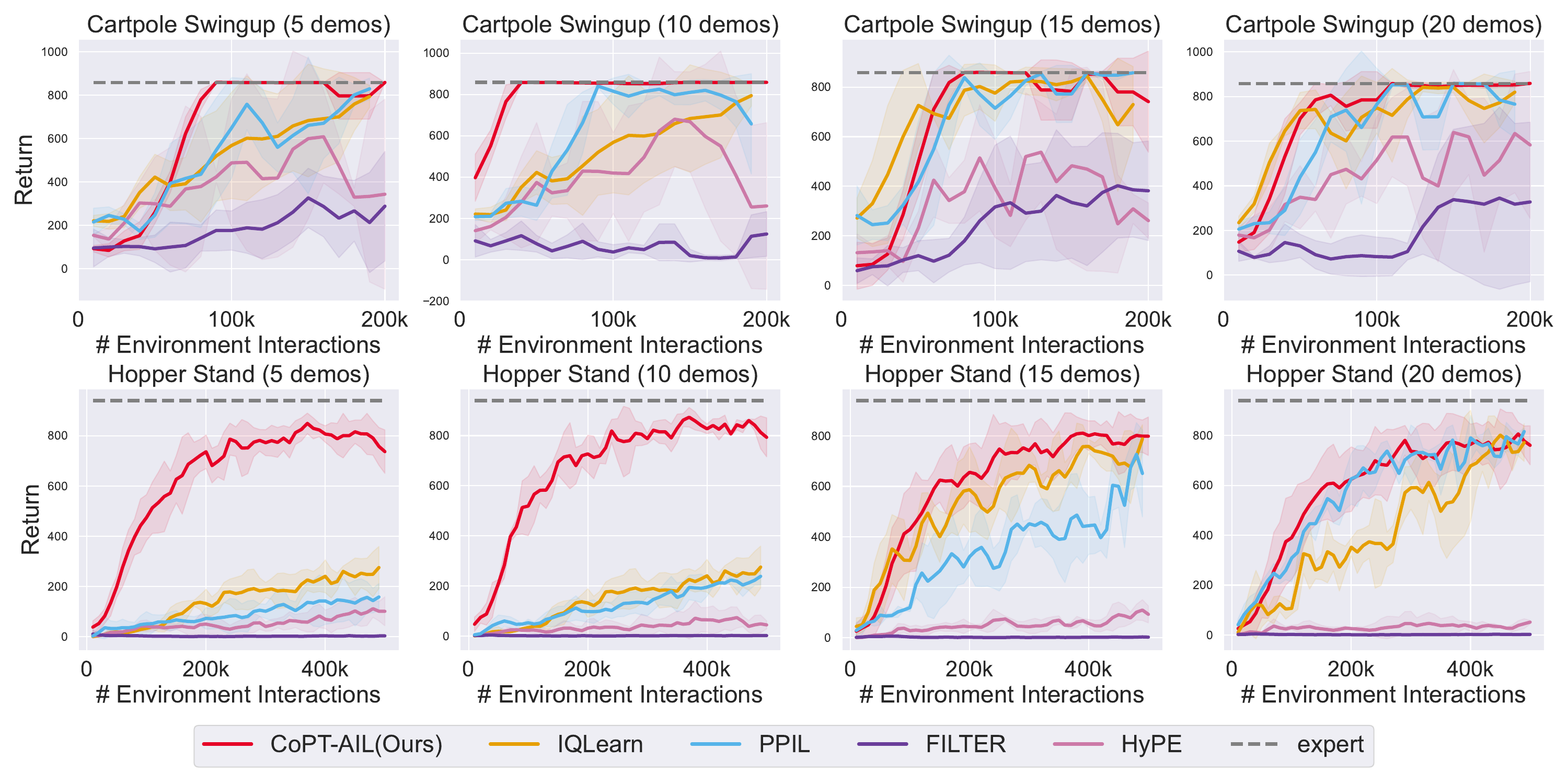}
    \caption{Learning curves of CoPT-AIL and the other baselines under different numbers of expert trajectories. Here the $x$-axis is the number of environment interactions and the $y$-axis is the return.}
    \label{fig:sensitivity_trajs}
\end{figure*}

\begin{figure*}[htbp]
    \centering
    \includegraphics[width=0.9\linewidth]{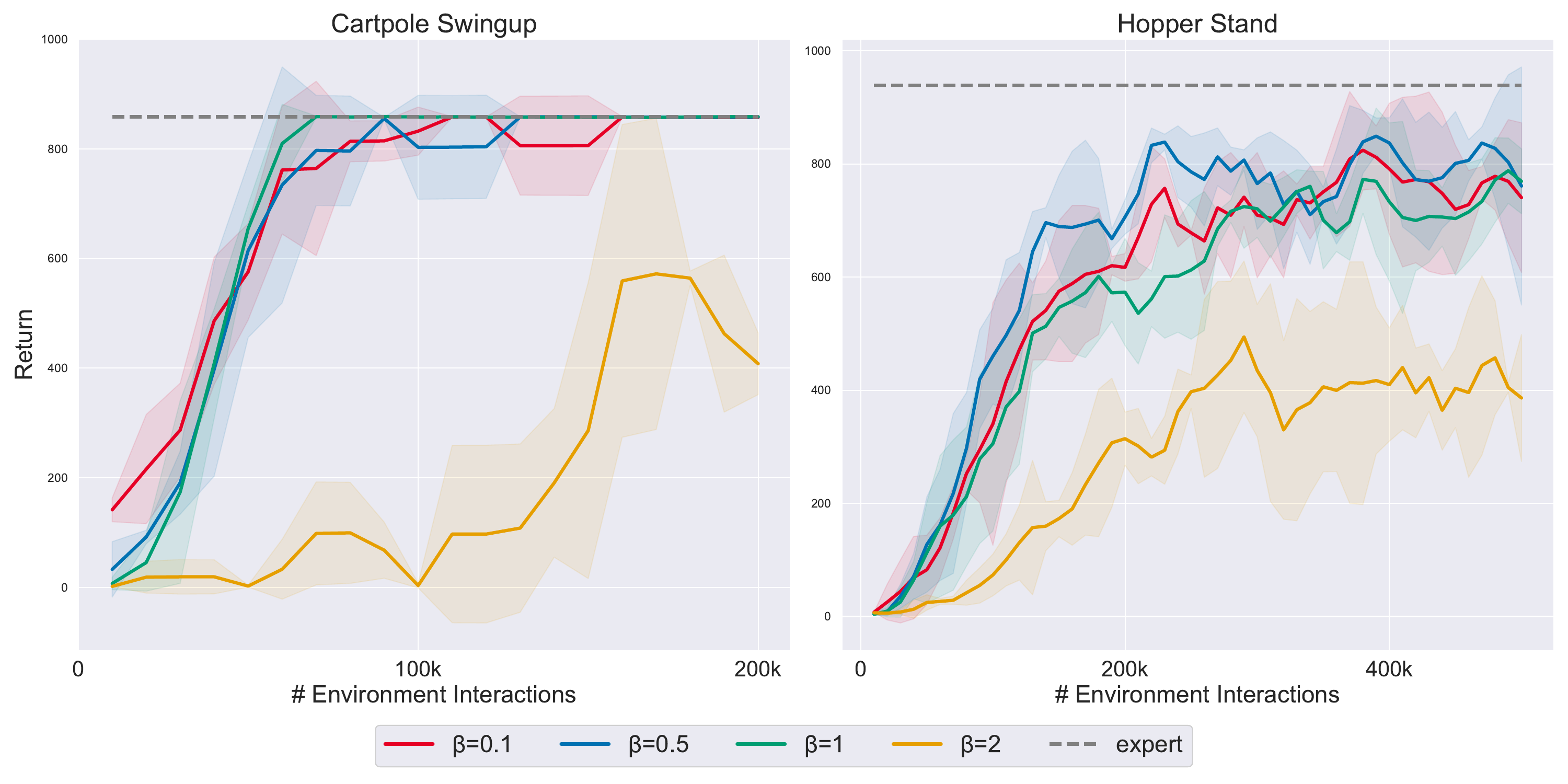}
    \caption{Learning curves of CoPT-AIL with different regularization coefficient $\beta$. Here the $x$-axis is the number of environment interactions and the $y$-axis is the return.}
    \label{fig:sensitivity_reg}
\end{figure*}

\subsection{Sensitivity Analysis}

\subsubsection{Sensitivity Analysis on the Number of Demonstrations}
We conduct a sensitivity analysis on the number of expert trajectories $N$ in $\gDE$. Specifically, we evaluate CoPT-AIL and other baselines with $N \in \{ 5, 10, 15, 20 \}$, and the corresponding learning curves are presented in \cref{fig:sensitivity_trajs}. We observe that CoPT-AIL matches or surpasses the convergence rates of prior SOTA AIL methods across different number of expert trajectories.

\subsubsection{Sensitivity Analysis on the Regularization Coefficient}

We also conduct a sensitivity analysis on the regularization coefficient $\beta$ in Eq.~(\ref{eq:practical_reward_update}). Specifically, we evaluate CoPT-AIL with $\beta \in \{ 0.1, 0.5, 1, 2 \}$, and the corresponding learning curves are presented in \cref{fig:sensitivity_reg}. CoPT-AIL maintains strong performance for $\beta \in [0.1, 1]$. When $\beta$ is large (e.g., $\beta = 2$), performance degrades because the regularization term begins to dominate the reward-training objective and can misguide the reward model.

\subsection{Consistency between Reward Pre-training and Reward Fine-tuning}

In CoPT-AIL, reward learning consists of two stages: BC-based reward pre-training and AIL-based reward fine-tuning. We investigate whether any “unlearning’’ occurs between these stages.

To this end, we first evaluate the gradient cosine similarity between the BC and AIL reward objectives. The resultant curves are shown in \cref{fig:cosine_similarity}. We observe that the gradient cosine similarity maintains a high value $\geq 0.8$ (with a maximal possible value of $1$) throughout online training. This indicates that the BC and AIL reward objectives are closely aligned rather than working against each other.

Besides, the ultimate goal of reward learning is to produce a reward function that assigns high values to expert data and low values to non-expert data. To verify that the BC and AIL objectives contribute synergistically to this goal, we track the reward gap between expert data and replay buffer data $\mathbb{E}_{(s, a) \sim \mathcal{D}^{\text{E}}} [r_{\phi} (s, a)] - \mathbb{E}_{(s, a) \sim \mathcal{D}^{\text{replay}}} [r_{\phi} (s, a)]$ throughout the entire reward learning process. The curves are reported in \cref{fig:reward_gap}. The first 100K reward gradient steps correspond to the reward pre-training procedure while the remaining gradient steps correspond to the reward fine-tuning procedure. Importantly, when transitioning from pre-training to fine-tuning, the reward gap stays at its previously high level rather than decreasing, indicating that no noticeable unlearning occurs during the switch. This further confirms that the BC and AIL reward objectives operate synergistically, not adversarially.

\begin{figure*}[htbp]
    \centering
    \includegraphics[width=0.8\linewidth]{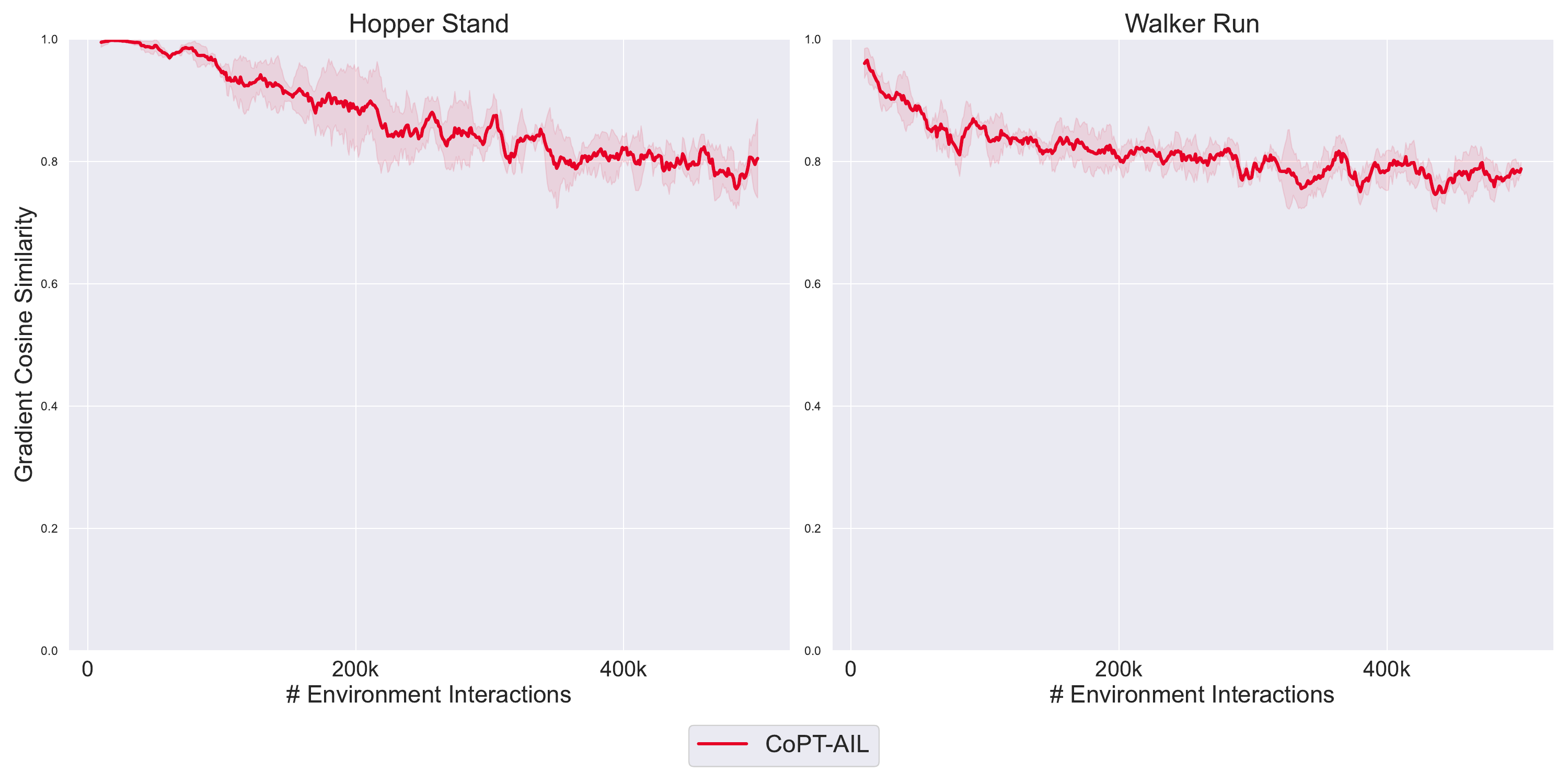}
    \caption{Gradient cosine similarity of reward pre-training and fine-tuning objectives in CoPT-AIL. Here the $x$-axis is the number of environment interactions and the $y$-axis is the gradient cosine similarity.}
    \label{fig:cosine_similarity}
\end{figure*}

\begin{figure*}[htbp]
    \centering
    \includegraphics[width=0.8\linewidth]{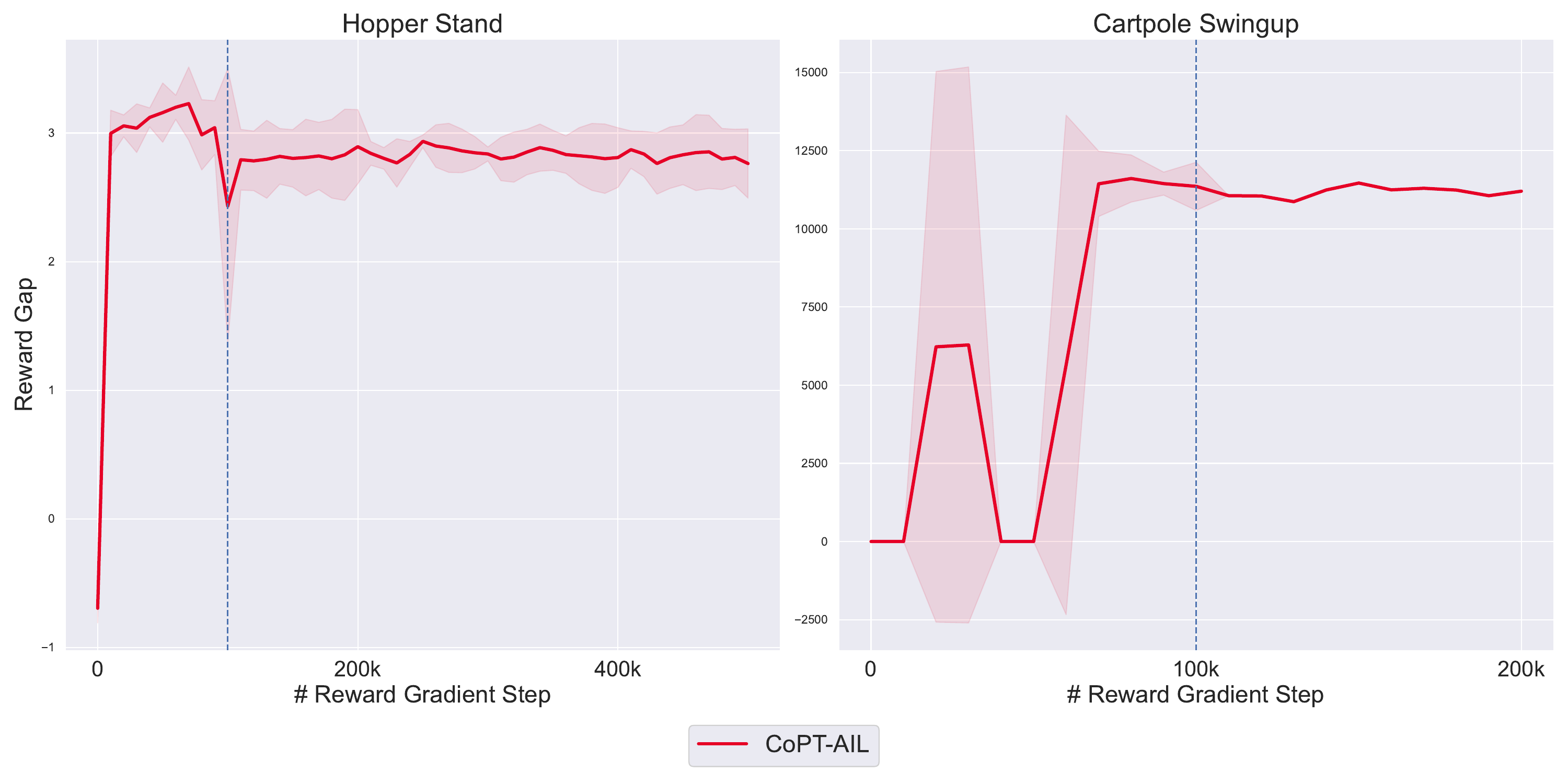}
    \caption{Reward gaps between expert data and replay data during the entire reward learning procedure in CoPT-AIL. Here the $x$-axis is the number of reward gradient steps and the $y$-axis is the reward gap $\expect_{(s, a) \sim \mathcal{D}^{\expert}} [r (s, a)] - \expect_{(s, a) \sim \mathcal{D}^{\text{replay}}} [r (s, a)]$. The first 100K reward gradient steps correspond to the reward pre-training procedure while the remaining gradient steps correspond to the reward fine-tuning procedure. }
    \label{fig:reward_gap}
\end{figure*}

\subsection{Empirical Evaluations of Relative Policy Evaluation Error}
The main theoretical prediction from our theory is that the proposed reward pre-training mechanism can reduce the relative policy evaluation error. Here we empirically validate this theoretical prediction by comparing the relative policy evaluation error under the pre-trained reward model and randomly initialized reward model. Recall that the relative policy evaluation error is defined as $(V^{\pi^{\text{E}}}_{r^\star} - V^{\pi^{1}}_{r^\star}) - (V^{\pi^{\text{E}}}_{r} - V^{\pi^{1}}_{r})$, where $\pi^1$ is the initial policy. Here we use the ground-truth reward defined in DMC tasks as $r^\star$ and approximate the policy value through Monte Carlo estimation using 20 trajectories. Below, we report the normalized relative policy evaluation error divided by the horizon length. 

\begin{table}[htbp]
\centering
\resizebox{\textwidth}{!}{
\begin{tabular}{lcccccc}
\toprule
\textbf{Task} & \textbf{Walker Stand} & \textbf{Walker Run} & \textbf{Cartpole Swingup} & \textbf{Hopper Hop} & \textbf{Hopper Stand} & \textbf{Finger Spin} \\
\midrule
CoPT-AIL & 1.51 & 1.49 & -10.77 & 0.63 & 1.84 & 1.93 \\
Baseline & 56.76 & 52.44 & 2.69 & 35.21 & 7.30 & 27.66 \\
\bottomrule
\end{tabular}
}
\caption{Relative policy evaluation error of CoPT-AIL and the baseline across DMControl tasks.}
\label{tab:coptrail_baseline}
\end{table}

\end{document}